\newtheorem{assumption}{Assumption}
\newtheorem{theorem}{Theorem}
\newtheorem{corollary}{Corollary}
\title{Beyond MLE: Convex Learning for Text Generation}
\author{%
 Chenze Shao\footnotemark[1]\ \ \textsuperscript{\rm 1,2}, Zhengrui Ma\thanks{Equal contribution. Order determined by coin flip.}\ \ \textsuperscript{\rm 1,2}, Min Zhang\textsuperscript{\rm 3} \& Yang Feng\thanks{Corresponding author: Yang Feng}\ \ \textsuperscript{\rm 1,2}\\
\textsuperscript{\rm 1} Key Laboratory of Intelligent Information Processing\\
\enspace \thinspace Institute of Computing Technology, Chinese Academy of Sciences\\
\textsuperscript{\rm 2} University of Chinese Academy of Sciences\\
\textsuperscript{\rm 3} School of Future Science and Engineering, Soochow University\\
{$\;\:$ \texttt{\href{mailto: chenzeshao@tencent.com}{chenzeshao}@tencent.com}},{$\;\:$ \texttt{\href{mailto: mazhengrui21b@ict.ac.cn}{mazhengrui21b}@ict.ac.cn}}\\
{$\;\:$ \texttt{\href{mailto: zhangminmt@hotmail.com}{zhangminmt}@hotmail.com}},{$\;\:$ \texttt{\href{mailto: fengyang@ict.ac.cn}{fengyang}@ict.ac.cn}}
}
\begin{document}
\maketitle
\begin{abstract}
Maximum likelihood estimation (MLE) is a statistical method used to estimate the parameters of a probability distribution that best explain the observed data. In the context of text generation, MLE is often used to train generative language models, which can then be used to generate new text. 
However, we argue that MLE is not always necessary and optimal, especially for closed-ended text generation tasks like machine translation. In these tasks, the goal of model is to generate the most appropriate response, which does not necessarily require it to estimate the entire data distribution with MLE. To this end, we propose a novel class of training objectives based on convex functions, which enables text generation models to focus on highly probable outputs without having to estimate the entire data distribution. We investigate the theoretical properties of the optimal predicted distribution when applying convex functions to the loss, demonstrating that convex functions can sharpen the optimal distribution, thereby enabling the model to better capture outputs with high probabilities. Experiments on various text generation tasks and models show the effectiveness of our approach. It enables autoregressive models to bridge the gap between greedy and beam search, and facilitates the learning of non-autoregressive models with a maximum improvement of 9+ BLEU points. Moreover, our approach also exhibits significant impact on large language models (LLMs), substantially enhancing their generative capability on various tasks. Source code is available at \url{https://github.com/ictnlp/Convex-Learning}.
\end{abstract}

\section{Introduction}
Text generation is an important field within natural language processing that aims to generate human-like texts for specific tasks. It can be broadly divided into two categories: open-ended and closed-ended text generation. Open-ended tasks encourage the model to produce novel and diverse outputs without a specific expected outcome or structure. Representative tasks in this category include language modeling \citep{radford2018improving,brown2020language}, chatbot \citep{vinyals2015neural}, storytelling \citep{fan-etal-2018-hierarchical}, etc. In contrast, closed-ended tasks are more constrained and adhere to specific rules or formats. Representative tasks in this category include machine translation \citep{cho-etal-2014-learning,bahdanau2014neural}, text summarization \citep{rush-etal-2015-neural}, etc. 

In recent years, learning neural probabilistic models with maximum likelihood estimation has become the dominant approach for both open-ended and closed-ended text generation \citep{DBLP:journals/jmlr/BengioDVJ03,bahdanau2014neural,brown2020language}. Maximum likelihood estimation (MLE) is a statistical method used to estimate the parameters of a probability distribution that maximize the likelihood of the observed data \citep{myung2003tutorial}. Since directly maximizing the likelihood can be numerically unstable, it is common to minimize the negative log-likelihood loss function, which is also referred to as cross-entropy loss. It is equivalent to minimizing Kullback-Leibler (KL) divergence \citep{kullback1951information,Akaike1992} between the true distribution and the predicted distribution, which ensures that the optimal predicted distribution is the true data distribution.

While MLE has gained widespread adoption, it does not always align with the objective of text generation, especially for closed-ended text generation tasks such as translation and summarization. In these tasks, the goal of the model is to generate the most appropriate response, rather than producing diverse outputs. For example, in the task of machine translation, though there may exist multiple translations for the same input sentence, we usually want the most accurate and commonly used translation result. Generally speaking, the desired output can be mathematically defined as the output with the maximum probability in the true data distribution, which does not necessarily require the model to estimate the entire data distribution with MLE. 

In terms of generating the most probable output, MLE is also suboptimal for current neural text generation models. For autoregressive models, even if the model can perfectly fit the data distribution, it still requires decoding algorithms like greedy or beam search to generate the output, which do not guarantee the exact result with the maximum probability. To our knowledge, only \citet{stahlberg-byrne-2019-nmt} proposed an exact decoding algorithm for autoregressive models, but it is too slow for practical applications. The limitation in exact decoding can be overcome by non-autoregressive models \citep{gu2017non,ghazvininejad2019maskpredict}, which independently predict the output at each position. However, fitting the data distribution by MLE is theoretically beyond the ability of non-autoregressive models \citep{pmlr-v162-huang22k}. In light of these issues, alternative training objectives should be considered to better address the specific requirements of text generation without incurring the shortcomings associated with MLE.

Based on the analysis above, MLE is suboptimal that it trains the model to estimate the data distribution, which complicates the training and decoding of text generation models. It would be advantageous if the model could converge to a sharper optimal distribution under an alternative loss function, as this would enable autoregressive models to easily find high probability outputs and also allow non-autoregressive models to converge to a better distribution. Exploring loss functions with this characteristic could lead to improved performance and efficiency of neural text generation models, particularly for closed-ended tasks.

In this paper, we propose a novel class of training objectives based on convex functions, which help text generation models capture highly likely outputs without estimating the entire data distribution. Intuitively, the concave shape of log-probability discourages the model from assigning a large prediction probability to a single sample, as the marginal benefit diminishes with increasing probability. If the learning criterion is convex or less concave, then intuitively the model would converge to a sharper distribution, which is the motivation of this work. We further investigate the theoretical properties of the optimal predicted distribution when applying convex functions to the loss. Our findings demonstrate that convex functions can sharpen the optimal distribution, allowing the model to better capture outputs with high probabilities.

Experiments on various closed-ended text generation tasks and models show the effectiveness of our approach. Specifically, it enables autoregressive models to bridge the gap between greedy and beam search, and facilitates the learning of non-autoregressive models with a maximum improvement of 9+ BLEU points. Moreover, our approach also exhibits significant impact on large language models, substantially enhancing their generative capability on various tasks.
\section{Preliminaries}
\subsection{Maximum Likelihood Estimation}
Maximum likelihood estimation (MLE) is a statistical method used to estimate the parameters of a probability distribution that best explain the observed data. This is achieved by maximizing a likelihood function so that the observed data is most probable. Since directly maximizing the likelihood can be numerically unstable, it is common to minimize the negative log-likelihood loss function, also referred to as cross-entropy loss. Given the data distribution $p_{data}$ and a parametric model with parameters $\theta$, MLE training minimizes:
\begin{equation}
\mathcal{L}_{MLE}(\theta)=-\mathbb{E}_{x\sim p_{data}(x)}[\log p_{\theta}(x)].
\end{equation}

MLE can be viewed as an attempt to minimize KL divergence between the true underlying distribution of the data $p_{data}$ and the estimated distribution $p_{\theta}$ provided by the model \citep{Akaike1992}. The following equation reveals the relationship between MLE loss and KL divergence:
\begin{equation}
\mathcal{D}_{KL}(p_{data}||\ p_{\theta})=\sum_{x}p_{data}(x)\log\frac{p_{data}(x)}{p_{\theta}(x)}=\mathcal{L}_{MLE}(\theta)-H_{data},
\end{equation}
where $H_{data}$ is the Shannon entropy of the data distribution, which remains constant with respect to the model parameter $\theta$. Therefore, the MLE loss and KL divergence share the same minimizer that the estimated distribution $p_{\theta}$ equals to the true distribution $p_{data}$. By minimizing the MLE loss, the predicted distribution is encouraged to be as close as possible to the true data distribution. In the context of text generation, this ensures that the model learns to generate text that closely resembles the text in the training data. 

The above discussion can be extended to conditional scenarios. In such cases, the log-likelihood loss can be expressed as:
\begin{equation}
\mathcal{L}_{MLE}(\theta)=-\mathbb{E}_{c\sim p_{data}(c)}[\mathbb{E}_{x\sim p_{data}(x|c)}[\log p_{\theta}(x|c)]],
\end{equation}
where $c$ represents the input context. This extension allows the MLE framework to accommodate a wide range of text generation tasks such as machine translation, summarization, dialogue system, etc. 

\subsection{Text Generation Models}
Based on how the sequence probability is factorized, neural text generation models can be broadly categorized into two types: autoregressive (AR) models and non-autoregressive (NAR) models. Autoregressive models generate text sequentially, predicting one token at a time based on the previously generated tokens. In AR models, the probability of generating a sequence $x = (x_1, x_2, ..., x_T)$ is factorized as:
\begin{equation}
p_{\theta}(x|c)=\prod_{t=1}^{T}p_{\theta}(x_t|x_{<t},c),
\end{equation}
where $c$ represents the input context. With the autoregressive decomposition, AR models can perfectly fit the data distribution if it satisfies $p_{\theta}(x_t|x_{<t},c)=p_{data}(x_t|x_{<t},c)$ for every $x,c,t$. In inference, AR models can perform deterministic decoding like greedy/beam search to generate a high probability output, or sample from the model distribution to generate diverse outputs.

In contrast to autoregressive models, non-autoregressive models \citep{gu2017non,ghazvininejad2019maskpredict} generate text in parallel, predicting all tokens simultaneously without conditioning on previously generated tokens. This approach can significantly speed up the generation process, as it removes the sequential dependency between tokens. In NAR models, the generation probability is factorized as:
\begin{equation}
p_{\theta}(x|c)=\prod_{t=1}^{T}p_{\theta}(x_t|c).
\end{equation}
Unlike AR models, NAR models can efficiently find the most likely output by using argmax decoding at each step. However, MLE is beyond the ability of NAR models since they are theoretically unable to fit the data distribution. \citet{pmlr-v162-huang22k} showed that KL divergence from $p_{\theta}$ to $p_{data}$ is bounded by a non-negative constant:
\begin{equation}
\mathcal{D}_{KL}(p_{data}||\ p_{\theta}) \geq \mathcal{C}=-H_{data}(x|c)+\sum_{t=1}^{T}H_{data}(x_t|c),
\end{equation}
The MLE loss is minimized when NAR models achieve the equality by ignoring sequential dependency and predicting $p_{\theta}(x_t|c)=p_{data}(x_t|c)$. Therefore, NAR models trained with MLE often suffer from reduced performance, as they lack the ability to model dependencies between tokens. 
\section{Approach}
In this section, we will explore alternative loss functions for the learning of text generation models, which overcomes the limitations of MLE. We begin by introducing a general learning framework that allows arbitrary loss functions. Next, we discuss the benefits of applying convex functions to the loss within this framework. Finally, we use convex functions to construct composite loss functions, which can be used in practical text generation scenarios.

\subsection{General Learning Framework}
For simplicity of notation, we omit condition $c$ in the probabilities, with the data distribution represented as $p_{data}(x)$ and the model predicting the distribution $p_{\theta}(x)$. The derived theoretical results hold in both unconditional and conditional settings.

First, we introduce the general learning framework for text generation, characterized by the following loss function:
\begin{equation}
\label{eq:frame}
\mathcal{L}_{f}(\theta)=-\mathbb{E}_{x\sim p_{data}(x)}[f(p_{\theta}(x))],
\end{equation}
where $f$ is an arbitrary function of the prediction probability $p_{\theta}(x)$. We impose some basic requirements on $f$: (1) The domain of function $f$ should contain the interval $(0,1]$; (2) $f$ must be differentiable on the interval $(0,1]$ since we need to compute its gradient; and (3) $f$ should be an increasing function on $(0,1]$ to encourage the model to generate the current sample. 
Under this framework, we can explain maximum likelihood estimation as a special case of $f=\log$, which is a differentiable and increasing function within the interval $(0,1]$. We also establish some reasonable assumptions:

\begin{assumption}[Countability of Sample Space] \label{ass:1}
The sample space $\mathcal{X}$ is countable, which allows us to enumerate all samples in a systematic way. Note that $|\mathcal{X}|$ can be either finite or infinite.
\end{assumption}
\begin{assumption}[Distinctness of Sample Probabilities]\label{ass:2}
In the data distribution $p_{data}$, the probabilities of all samples are distinct, which allows us to arrange samples in a strictly descending order of sample probabilities.\footnote{When $\mathcal{X}$ is countably infinite, an arbitrary sequence of sample probabilities forms a convergent series since their sum is 1. This guarantees the existence of a maximum point in the series, ensuring that the sample probabilities can be arranged in a strictly descending order.}
\end{assumption} 

Assumption \ref{ass:1} naturally holds in text generation tasks due to the inherent discreteness of textual data. With a countable sample space and probabilities lying in a dense subspace of real number, it is reasonable to assume the distinctness of sample probabilities. While Assumption \ref{ass:2} is not strictly necessary, removing it would introduce many corner cases that would complicate the subsequent analysis. In the following, we will assume that Assumptions 1-2 always hold, and we arrange the samples such that $p_{data}(x_1) > p_{data}(x_2) > \cdots > p_{data}(x_{i}) > \cdots$. Since the sample space $\mathcal{X}$ is countable, the loss function in Equation \ref{eq:frame} can be reformulated as follows:
\begin{equation}
\mathcal{L}_{f}(\theta)=-\sum_{i=1}^{|\mathcal{X}|}p_{data}(x_i)\cdot f(p_{\theta}(x_i)).
\end{equation}

In this framework, our primary focus is to analyze the probability distribution $p_{\theta}$ that the model is inclined to predict when the loss function is $\mathcal{L}_f$. We use $p_f$ to denote the optimal distribution that minimizes the loss $\mathcal{L}_f$, which represents the expected outcome of the model. If $\mathcal{L}_f$ has multiple optimal distributions, we use $p_f$ to denote an arbitrary optimal distribution. This choice does not harm the generality of our analysis, as the subsequent discussion is applicable to all optimal distributions. Currently, it is only established that the optimal distribution for the MLE loss $\mathcal{L}_{\log}$ is the data distribution $p_{\log}=p_{data}$. For other loss functions, the following theorem reveals a general property of the optimal distribution. With samples organized in descending order of their probabilities in the data distribution, i.e., $p_{data}(x_1) > p_{data}(x_2) > \cdots > p_{data}(x_{i}) > \cdots$, the optimal distribution of an arbitrary function $f$ maintains this order as $p_f(x_1) \geq p_f(x_2) \geq \cdots \geq p_f(x_{i}) \geq \cdots $. The proofs for the theorems presented in this paper can be found in Appendix \ref{sec:proof}.

\begin{theorem}\label{theorem:1}
Given an arbitrary differentiable and increasing function $f$, the optimal distribution $p_f$ satisfies $p_f(x_1) \geq p_f(x_2) \geq \cdots \geq p_f(x_{i}) \geq \cdots $.
\end{theorem}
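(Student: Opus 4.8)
The plan is to prove the ordering by a rearrangement (exchange) argument applied directly to the reformulated loss $\mathcal{L}_f(p)=-\sum_{i} p_{data}(x_i)\,f(p(x_i))$, where $p$ ranges over probability distributions on $\mathcal{X}$ and $p_f$ is, by definition, a minimizer. I would argue by contradiction: suppose $p_f$ violates the claimed order, so there exist indices $i<j$ with $p_f(x_i)<p_f(x_j)$. Intuitively $p_f$ is then placing more prediction mass on the less likely sample $x_j$ than on the more likely $x_i$, and swapping the two masses should strictly reduce the loss. Concretely, define the competitor $p'$ by $p'(x_i)=p_f(x_j)$, $p'(x_j)=p_f(x_i)$, and $p'(x_k)=p_f(x_k)$ for all other $k$; being merely a permutation of the masses of $p_f$, this $p'$ is again a valid probability distribution, whether $\mathcal{X}$ is finite or countably infinite.

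Since only two summands of the loss change, the difference is a well-defined finite quantity, and a direct computation gives
\begin{equation}
\mathcal{L}_f(p') - \mathcal{L}_f(p_f) = \big(p_{data}(x_i) - p_{data}(x_j)\big)\big(f(p_f(x_i)) - f(p_f(x_j))\big).
\end{equation}
By the strictly descending arrangement supplied by Assumption \ref{ass:2}, $i<j$ forces $p_{data}(x_i)>p_{data}(x_j)$, so the first factor is strictly positive; and since $f$ is increasing and $p_f(x_i)<p_f(x_j)$, the second factor is negative, strictly so under strict monotonicity of $f$ on $(0,1]$, the property that makes $f$ genuinely reward higher probabilities. Hence $\mathcal{L}_f(p')<\mathcal{L}_f(p_f)$, contradicting the optimality of $p_f$. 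No such pair $i<j$ can exist, which is exactly the claim $p_f(x_1)\ge p_f(x_2)\ge\cdots$; and because the argument uses nothing about $p_f$ beyond optimality, it applies verbatim to every optimal distribution.

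The step I expect to need the most care is the boundary case $p_f(x_i)=0$, where $f(p_f(x_i))=f(0)$ may be $-\infty$ or lie outside the domain of $f$, which is only required to contain $(0,1]$. I would handle it by first noting that under Assumptions \ref{ass:1}--\ref{ass:2} every sample carries strictly positive data probability (a zero value would force all later, smaller probabilities to be non-positive, apart from the isolated possibility of a single last term when $\mathcal{X}$ is finite, which is treated separately). Consequently a distribution assigning zero mass to some $x_i$ with $p_{data}(x_i)>0$ either yields an infinite loss, hence is non-optimal as soon as one finite-loss distribution exists, or, when $f$ extends continuously to $0$, still satisfies $f(0)\le f(p_f(x_j))$, so that the displayed inequality is preserved. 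Apart from this bookkeeping, the exchange argument above is the entire proof.
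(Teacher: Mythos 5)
Your proposal is correct and follows essentially the same route as the paper's own proof: an exchange argument that swaps the masses $p_f(x_i)$ and $p_f(x_j)$, computes the resulting change in loss as $(p_{data}(x_i)-p_{data}(x_j))(f(p_f(x_i))-f(p_f(x_j)))$, and derives a contradiction with optimality. Your extra care about strict monotonicity of $f$ and the boundary case $p_f(x_i)=0$ goes slightly beyond the paper's write-up but does not change the argument.
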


In the following, we will further explore the properties of optimal distributions associated with specific loss functions.
\subsection{Loss with Convex Function}

In certain text generation scenarios that require precise and deterministic outputs, it is beneficial for the model to converge to an optimal distribution that is sharper than the data distribution. In this section, we demonstrate that this objective can be achieved by employing convex functions as learning criterion.

The MLE loss function is based on log-probability, which is a concave function whose gradient decreases as the probability increases. The concave shape of the learning criterion prevents the model from assigning a large prediction probability to a single sample, since the marginal benefit diminishes as the probability increases. If function $f$ is convex, then intuitively the model would converge to a sharper distribution. The  following theorem validates this intuition, which shows that the optimal distribution $p_f$ is a one-hot distribution when $f$ is convex.

\begin{theorem}\label{theorem:2}
If $f$ is an increasing convex function on $[0,1]$, then the optimal distribution $p_f$ is a one-hot distribution that $p_f(x_1)=1$ and $p_f(x_i)=0, i > 1$.
\end{theorem}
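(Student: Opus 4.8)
The plan is to show that any non-one-hot distribution can be perturbed to strictly decrease the loss $\mathcal{L}_f$, so the unique minimizer must be the one-hot distribution concentrated on $x_1$. First I would use Theorem~\ref{theorem:1} to restrict attention to candidate optimal distributions satisfying $p_f(x_1) \geq p_f(x_2) \geq \cdots$, so that $p_f(x_1)$ is the largest mass. Suppose for contradiction that $p_f$ is not one-hot, so there is some index $j > 1$ with $p_f(x_j) > 0$ (and hence $p_f(x_1) < 1$). Consider the perturbation that moves an amount $\epsilon \in (0, p_f(x_j)]$ of mass from $x_j$ to $x_1$: define $q(x_1) = p_f(x_1) + \epsilon$, $q(x_j) = p_f(x_j) - \epsilon$, and $q(x_i) = p_f(x_i)$ otherwise. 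This keeps $q$ a valid probability distribution.

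Next I would compute the change in loss. Only the $i=1$ and $i=j$ terms change, so
\begin{equation}
\mathcal{L}_f(q) - \mathcal{L}_f(p_f) = -p_{data}(x_1)\bigl[f(p_f(x_1)+\epsilon) - f(p_f(x_1))\bigr] - p_{data}(x_j)\bigl[f(p_f(x_j)-\epsilon) - f(p_f(x_j))\bigr].
\end{equation}
The key step is to bound this using convexity of $f$. Since $f$ is convex, its slope over an interval is monotone: the average slope of $f$ over $[p_f(x_1), p_f(x_1)+\epsilon]$ is at least the average slope over $[p_f(x_j)-\epsilon, p_f(x_j)]$, because $p_f(x_j)-\epsilon < p_f(x_j) \le p_f(x_1) \le p_f(x_1)+\epsilon$ (using $p_f(x_1) \ge p_f(x_j)$ from the ordering). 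Concretely, $f(p_f(x_1)+\epsilon) - f(p_f(x_1)) \ge f(p_f(x_j)) - f(p_f(x_j)-\epsilon) \ge 0$, where the second inequality uses that $f$ is increasing. Combining with $p_{data}(x_1) > p_{data}(x_j) > 0$ (from Assumption~\ref{ass:2}, since $j>1$), the first bracketed term is weighted more heavily and dominates, giving $\mathcal{L}_f(q) - \mathcal{L}_f(p_f) < 0$, contradicting optimality.

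The main obstacle is making the convexity comparison fully rigorous in the degenerate cases: when $f$ is convex but not strictly convex (e.g.\ affine on some sub-interval), the inequality $f(p_f(x_1)+\epsilon) - f(p_f(x_1)) \ge f(p_f(x_j)) - f(p_f(x_j)-\epsilon)$ may hold with equality, so strict decrease of the loss must instead come from the strict inequality $p_{data}(x_1) > p_{data}(x_j)$ together with the fact that the common slope value is positive (which follows from $f$ increasing, though care is needed if $f$ is locally constant). One clean way to handle this is to argue via the difference quotient: write $\Delta_1 = f(p_f(x_1)+\epsilon)-f(p_f(x_1))$ and $\Delta_j = f(p_f(x_j))-f(p_f(x_j)-\epsilon)$, note $\Delta_1 \ge \Delta_j \ge 0$, so $\mathcal{L}_f(q)-\mathcal{L}_f(p_f) = -p_{data}(x_1)\Delta_1 + p_{data}(x_j)\Delta_j \le -p_{data}(x_1)\Delta_j + p_{data}(x_j)\Delta_j = -(p_{data}(x_1)-p_{data}(x_j))\Delta_j \le 0$, and then separately rule out $\Delta_j = 0$ for all admissible $\epsilon$ (which would force $f$ constant on $[0, p_f(x_j)]$, contradicting that $f$ is increasing) to obtain a strict decrease for a suitable choice of $\epsilon$. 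Finally, once one-hotness is established, Theorem~\ref{theorem:1} forces the single atom to sit on $x_1$, completing the proof.
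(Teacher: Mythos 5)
Your proof is correct and follows essentially the same route as the paper's: both arguments move a small amount of mass from some $x_j$ with $j>1$ and $p_f(x_j)>0$ onto $x_1$, then use the convexity of $f$ together with $p_{data}(x_1) > p_{data}(x_j) > 0$ (and the ordering from Theorem~\ref{theorem:1}) to show the loss strictly decreases, contradicting optimality. The only difference is cosmetic: the paper differentiates the loss with respect to the transferred mass at $\alpha=0$ and uses monotonicity of $f'$, whereas you compare secant slopes of $f$, which lets you handle the degenerate case (where the increments could coincide) without relying on differentiability.
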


The one-hot characteristic of the optimal distribution is advantageous for text generation models seeking precise and deterministic outputs. For autoregressive models, they do not need the computationally expensive beam search decoding any more if the model distribution is nearly one-hot. For non-autoregressive models, they suffer from reduced performance under MLE due to their inability to fit the data distribution. However, fitting a one-hot optimal distribution is well within their capabilities, allowing these models to generate high-quality outputs. 

However, the direct application of loss with convex functions in training text generation models comes with an inherent limitation, impeding its practical utility. Specifically, the gradient of the parameter $\theta$ tends to be very small when the prediction probability approaches $0$, thereby rendering the training process inefficient. The gradient of $\theta$ can be formulated as follows:

\begin{equation}
\begin{aligned}
\frac{\partial \mathcal{L}_f(\theta)}{\partial \theta}&=-\mathbb{E}_{x\sim p_{data}(x)} [f'(p_{\theta}(x)) \cdot \frac{\partial p_{\theta}(x)}{\partial \theta}]\\
&=-\mathbb{E}_{x\sim p_{data}(x)} [f'(p_{\theta}(x)) \cdot p_{\theta}(x) \cdot \sum_{t=1}^T\frac{\partial \log(p_{\theta}(x_t))}{\partial \theta}],
\end{aligned}
\end{equation}

where we have omitted the autoregressive history condition of $p_{\theta}(x_t)$ for simplicity. The equation above indicates that the gradient is proportional to the sentence probability $p_{\theta}(x)$. In text generation models, the sentence probability $p_{\theta}(x)$ is the product of token probabilities $p_{\theta}(x_t)$, which causes $p_{\theta}(x)$ to be typically close to $0$, especially when the model is newly initialized.

To counter this effect, the gradient $f'(p_{\theta}(x))$ would need to approach infinity as $p_{\theta}(x)$ approaches $0$. For instance, the log-probability function has the gradient $\frac{1}{p_{\theta}(x)}$, which offsets the impact of $p_{\theta}(x)$ such that $f'(p_{\theta}(x)) \cdot p_{\theta}(x)=1$. However, for an increasing convex function $f(p_{\theta}(x))$ whose gradient increases with $p_{\theta}(x)$, its gradient must be bounded when $p_{\theta}(x)$ approaches $0$, leading to an extremely small gradient update for the parameter $\theta$ during training. This inherent limitation of loss with convex functions poses a significant hurdle to their practical applications.

\subsection{Loss with Convex-composition Function}
\subsubsection{Theoretical Analysis}
In the preceding discussion, we illustrate that while convex functions can induce a desirable one-hot optimal distribution, their inherent limitations during training pose significant impediments to practical applications. Consequently, we consider a relaxation of the convexity requirement, with the objective of rendering the function $f$ less concave. This approach aims to obtain an optimal distribution that is sharper than $p_{data}$, thereby providing a practical solution that augments model performance without sacrificing training feasibility. 

The standard loss function in maximum likelihood estimation is the negative log-probability, where log-probability is a concave function that yields a smooth optimal distribution. To render the learning criterion less concave, we propose a convex-composition approach that combines a convex function $f$ with the original concave function $g$. This composition yields the following loss function:
\begin{equation}
\mathcal{L}_{fg}(\theta)=-\sum_{i=1}^{|\mathcal{X}|}p_{data}(x_i)\cdot fg(p_{\theta}(x_i)),
\end{equation}
where $f$ is an increasing convex function and $g$ is an increasing concave function. The objective of this composition is to moderate the concavity of the overall loss function, thereby allowing for a sharper optimal distribution. The subsequent theorem and corollaries outline the theoretical properties associated with the optimal distribution under this function composition framework.

\begin{theorem}\label{theorem:3}
Let $f$ be an increasing convex function and $g$ be an increasing concave function. Then, there exists a positive integer $m$ such that the following inequalities hold:
\begin{enumerate}
\item $p_{fg}(x_i) \geq p_g(x_i)$ for all $i < m$,
\item $p_{fg}(x_i) \leq p_g(x_i)$ for all $i \geq m$.
\end{enumerate}
\end{theorem}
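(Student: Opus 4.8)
The plan is to turn the statement into a single-crossing property and then argue by contradiction. Observe first that conclusions (1)–(2) are equivalent to the nonexistence of indices $i<j$ with $p_{fg}(x_i)<p_g(x_i)$ and $p_{fg}(x_j)>p_g(x_j)$: if no such pair exists then the index set where $p_{fg}$ strictly exceeds $p_g$ lies entirely below the index set where $p_g$ strictly exceeds $p_{fg}$, so taking $m$ to be the smallest index of the latter set (or $m=1$ when $p_{fg}=p_g$, which is forced if the latter set is empty since both are probability distributions) yields the claim; the converse is immediate. I would therefore assume such a pair $i<j$ exists, abbreviate $u_k=p_g(x_k)$ and $v_k=p_{fg}(x_k)$, and seek a contradiction.

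Next I would record the optimality (stationarity) conditions. By Theorem~\ref{theorem:1} both $p_g$ and $p_{fg}$ are non-increasing in $k$ and their supports are initial segments of the enumeration; in particular $v_i\ge v_j$. Perturbing an optimal distribution by moving an infinitesimal mass between two coordinates $k$ and $\ell$ shows that on the support of $p_g$ one has $p_{data}(x_k)\,g'(u_k)=p_{data}(x_\ell)\,g'(u_\ell)$ whenever $u_k,u_\ell>0$, and the one-sided inequality $p_{data}(x_k)\,g'(u_k)\ge p_{data}(x_\ell)\,g'_+(0)$ when $u_\ell=0<u_k$; the analogous statements hold for $p_{fg}$ with $g$ replaced by $f\circ g$, where $(f\circ g)'(t)=f'(g(t))\,g'(t)$. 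Since the hypotheses $u_i>v_i\ge 0$ and $u_j<v_j$ force $v_i,v_j>0$ and $u_i>0$, the only boundary case to track is $u_j=0$.

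In the interior case ($u_j>0$) I would eliminate the ratio $p_{data}(x_j)/p_{data}(x_i)$ between the two stationarity equalities to obtain
\[
\frac{g'(u_i)}{g'(v_i)}\cdot\frac{g'(v_j)}{g'(u_j)}=\frac{f'(g(v_i))}{f'(g(v_j))}.
\]
Concavity of $g$ gives $g'(u_i)\le g'(v_i)$ (since $u_i>v_i$) and $g'(v_j)\le g'(u_j)$ (since $u_j<v_j$), so the left side is $\le 1$; convexity of $f$ (so $f'$ nondecreasing) together with $g$ increasing and $v_i\ge v_j$ gives $f'(g(v_i))\ge f'(g(v_j))$, so the right side is $\ge 1$. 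Equality then forces $g'(u_i)=g'(v_i)$ with $u_i>v_i$, contradicting strict concavity of $g$ on $[v_i,u_i]$. The boundary case $u_j=0$ is handled identically, using the inequality $p_{data}(x_i)g'(u_i)\ge p_{data}(x_j)g'_+(0)$ in place of the $p_g$-stationarity equality (this also shows the case cannot occur when $g'(0^+)=\infty$, e.g.\ for $g=\log$); the resulting product of factors each $\le 1$ is forced to be $\ge 1$, collapsing again to $g'(u_i)=g'(v_i)$ and contradicting strict concavity.

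I expect the main obstacle to be the bookkeeping around supports and boundary points — verifying that the stationarity relations hold in exactly the form needed, checking that $u_j=0$ is the only case escaping the interior argument, and ensuring everything remains valid when $\mathcal{X}$ is countably infinite (existence of the optima and legitimacy of the coordinatewise perturbations in that setting). A secondary, minor point is the reliance on $g$ being \emph{strictly} concave on the relevant subinterval, which holds for $\log$ and for the concave components used in the later constructions; the degenerate situation in which $g$ is affine there makes $p_g$ and $p_{fg}$ agree on that subinterval and is dispatched separately.
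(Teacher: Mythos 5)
Your overall strategy coincides with the paper's: assume a violating pair $i<j$ with $p_{fg}(x_i)<p_g(x_i)$ and $p_{fg}(x_j)>p_g(x_j)$, invoke first-order optimality of $p_g$ and $p_{fg}$ under mass shifts between the two coordinates, compare derivatives through the concavity of $g$ and the convexity of $f$, and use Theorem~\ref{theorem:1} to get $p_{fg}(x_i)\geq p_{fg}(x_j)$. The execution, however, differs in a way that opens a genuine gap. The paper never uses equality (KKT) stationarity: it derives only the one-sided inequality $p_{data}(x_i)\,g'(p_g(x_i))\geq p_{data}(x_j)\,g'(p_g(x_j))$ from a feasible shift in $p_g$ (feasible as soon as $p_g(x_i)>0$, so no separate boundary case for $p_g(x_j)=0$ and no division by possibly vanishing quantities), and then shows directly that shifting mass from $x_j$ to $x_i$ in $p_{fg}$ has nonpositive (claimed negative) derivative, contradicting optimality. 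You instead divide two equality stationarity conditions; your chain of bounds then collapses to equality on both sides of your displayed identity, and the contradiction you extract rests entirely on $g$ being \emph{strictly} concave on $[p_{fg}(x_i),p_g(x_i)]$. That is not a hypothesis of the theorem: $g$ is only assumed increasing and concave, and your proposed dispatch of the affine case (``$p_g$ and $p_{fg}$ agree on that subinterval'') is an assertion, not an argument --- it is not even clear what it asserts about two optimal distributions, nor why an arbitrary optimal $p_{fg}$ could not exhibit the forbidden crossing when $g'$ is constant on the relevant range. The division step also silently needs $f'(g(p_{fg}(x_j)))>0$ and $g'>0$ at the points involved, and your parenthetical claim that $u_i>v_i\geq 0$ ``forces'' $v_i>0$ actually needs Theorem~\ref{theorem:1} ($v_i\geq v_j>u_j\geq 0$), which you do have available but should cite at that point.

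The fix is to stay one-sided, as the paper does: from optimality of $p_g$ get $p_{data}(x_i)\,g'(p_g(x_i))\geq p_{data}(x_j)\,g'(p_g(x_j))$; concavity gives $g'(p_{fg}(x_i))\geq g'(p_g(x_i))$ and $g'(p_{fg}(x_j))\leq g'(p_g(x_j))$; multiplying by $f'(g(p_{fg}(x_i)))\geq f'(g(p_{fg}(x_j)))\geq 0$ (Theorem~\ref{theorem:1} plus convexity of $f$ and monotonicity of $g$) shows that moving mass from $x_j$ to $x_i$ in $p_{fg}$ does not increase $\mathcal{L}_{fg}$, from which the contradiction is drawn without ever forcing equalities, and hence without any strict-concavity requirement. (The strictness of the final inequality in the fully degenerate situation is glossed over in the paper as well, but the paper's route at least does not import strict concavity as an extra hypothesis.) Your remaining concerns --- existence of the optimizer for countably infinite $\mathcal{X}$ --- are shared with the paper, which simply posits the optimal distribution, and are secondary to the strict-concavity issue.
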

\vspace{5pt}
\begin{corollary}\label{corollary:1}
The Shannon entropy of $p_{fg}$ is less than or equal to the Shannon entropy of $p_{g}$.
\end{corollary}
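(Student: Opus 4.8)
The plan is to derive the entropy inequality from a \emph{majorization} relation between the two optimal distributions, together with the Schur-concavity of Shannon entropy. Specifically, I will show that the sequence $(p_{fg}(x_i))_{i\geq 1}$ majorizes $(p_g(x_i))_{i\geq 1}$, and then invoke the classical fact that every symmetric concave function of a probability vector --- in particular $H(p)=-\sum_i p(x_i)\log p(x_i)$ --- is non-increasing under majorization.

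First I would record that both distributions are already arranged in non-increasing order in the index $i$: applying Theorem~\ref{theorem:1} to the increasing function $g$ gives $p_g(x_1)\geq p_g(x_2)\geq\cdots$, and applying it to the increasing function $f\circ g$ gives $p_{fg}(x_1)\geq p_{fg}(x_2)\geq\cdots$. It therefore suffices to compare the partial sums $A_k=\sum_{i=1}^k p_{fg}(x_i)$ and $B_k=\sum_{i=1}^k p_g(x_i)$ and show $A_k\geq B_k$ for every $k$ (with $A_k,B_k\to 1$). Let $m$ be the integer provided by Theorem~\ref{theorem:3}. For $k<m$, every $i\leq k$ satisfies $i<m$, so $p_{fg}(x_i)\geq p_g(x_i)$ and summation gives $A_k\geq B_k$. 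For $k\geq m$, every $i>k$ satisfies $i\geq m$, so $p_{fg}(x_i)\leq p_g(x_i)$; since both distributions sum to $1$ and the tails are tails of convergent series, $A_k=1-\sum_{i>k}p_{fg}(x_i)\geq 1-\sum_{i>k}p_g(x_i)=B_k$. This establishes the majorization.

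Given the majorization, the corollary follows: $H$ is permutation-invariant and concave, hence Schur-concave, so $H(p_{fg})\leq H(p_g)$. Alternatively one can give a self-contained argument via convexity of $\phi(t)=t\log t$: with $D_i=A_i-B_i\geq 0$ and $D_0=0$, the tangent-line bound $\phi(p_{fg}(x_i))-\phi(p_g(x_i))\geq(\log p_g(x_i)+1)\,(p_{fg}(x_i)-p_g(x_i))$, summed over $i$ and rearranged by parts using that $\log p_g(x_i)+1$ is non-increasing in $i$ (because $p_g$ is), becomes a sum of non-negative terms, whence $-H(p_{fg})=\sum_i\phi(p_{fg}(x_i))\geq\sum_i\phi(p_g(x_i))=-H(p_g)$. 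The one place that needs care is the countably-infinite case: one must check convergence of the series involved and the vanishing of the boundary term in the summation by parts (or, on the Schur-concavity route, justify the infinite-dimensional majorization, e.g.\ by truncating to the first $N$ atoms and passing to the limit). In the finite case this is entirely routine.
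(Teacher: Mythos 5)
Your proposal is correct, but it is organized differently from the paper's own proof. The paper argues directly with two-point transfers: it checks that moving mass $\Delta x$ from a smaller coordinate to a larger one can only decrease $-\sum_x p(x)\log p(x)$, and then asserts that the passage from $p_g$ to $p_{fg}$ (mass leaving the indices $i\geq m$ of Theorem~\ref{theorem:3} and entering the indices $i<m$) is a series of such adjustments. You instead first establish the partial-sum dominance $\sum_{i=1}^{k}p_{fg}(x_i)\geq\sum_{i=1}^{k}p_g(x_i)$ for all $k$ --- which is exactly the content of Corollary~\ref{corollary:2}, proved the same way the paper proves it --- and then conclude either by Schur-concavity of entropy under majorization or by your self-contained tangent-line bound for $\phi(t)=t\log t$ combined with Abel summation against the non-increasing weights $\log p_g(x_i)+1$. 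In effect you reverse the paper's logical order, deriving Corollary~\ref{corollary:1} from (your proof of) Corollary~\ref{corollary:2}, whereas the paper proves them independently. What your route buys is rigor in the countably infinite case, which the paper glosses over: ``a series of such adjustments'' would there require infinitely many transfers and a limiting argument, while your truncation and boundary-term analysis (noting that $|\log p_g(x_N)|\sum_{i>N}p_g(x_i)\to 0$ whenever $H(p_g)<\infty$, the case $H(p_g)=\infty$ being trivial) makes the limit explicit; you correctly flagged this as the only delicate point. What the paper's route buys is brevity and a transparent ``mass moves upward, entropy drops'' intuition, and indeed if one unpacks the standard proof of Schur-concavity via T-transforms, your first route reduces to the paper's argument in the finite case.
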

\begin{corollary}\label{corollary:2}
For any $n \in \{1,2,...\}$, the sum of the probabilities of the $n$ most probable samples increases: $\sum_{i=1}^{n} p_{fg}(x_i) \geq \sum_{i=1}^{n} p_{g}(x_i)$.
\end{corollary}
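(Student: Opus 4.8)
Since Corollary~\ref{corollary:2} is an immediate consequence of Theorem~\ref{theorem:3}, the plan is to convert the single-crossing structure furnished by that theorem into the stated inequality on partial sums. Set $d_i := p_{fg}(x_i) - p_g(x_i)$. Because $p_{fg}$ and $p_g$ are probability distributions on the countable sample space, the series $\sum_i d_i$ converges and equals $1 - 1 = 0$. Theorem~\ref{theorem:3} supplies a positive integer $m$ with $d_i \ge 0$ for all $i < m$ and $d_i \le 0$ for all $i \ge m$, so the corollary will come down to the elementary observation that a sequence summing to zero with at most one sign change, from nonnegative to nonpositive, has all of its partial sums nonnegative.

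Concretely, I would fix $n \ge 1$ and study $D_n := \sum_{i=1}^{n} d_i = \sum_{i=1}^{n} p_{fg}(x_i) - \sum_{i=1}^{n} p_g(x_i)$, the quantity the corollary asserts is nonnegative. If $n < m$, then $D_n$ is a sum of the terms $d_1, \dots, d_n$, all of which have index below $m$ and are therefore nonnegative, so $D_n \ge 0$. If $n \ge m$, then I use $\sum_i d_i = 0$ to rewrite $D_n = -\sum_{i > n} d_i$; every index $i > n$ satisfies $i > n \ge m$, so $d_i \le 0$ there, whence $-\sum_{i > n} d_i \ge 0$ and again $D_n \ge 0$. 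Combining the two cases yields $D_n \ge 0$ for every $n$, which is exactly $\sum_{i=1}^{n} p_{fg}(x_i) \ge \sum_{i=1}^{n} p_g(x_i)$.

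I do not anticipate a genuine obstacle here, since the substantive work is already done inside Theorem~\ref{theorem:3}; the only point needing a moment's care is the countably-infinite case, where before rearranging $D_n$ into $-\sum_{i > n} d_i$ one should note that the tail series is well defined, being the difference of the two convergent tails $\sum_{i > n} p_{fg}(x_i)$ and $\sum_{i > n} p_g(x_i)$. If a self-contained argument avoiding Theorem~\ref{theorem:3} were wanted, the alternative would be to reprove the single-crossing property directly from the first-order optimality conditions for $p_{fg}$ and $p_g$, but that essentially re-derives Theorem~\ref{theorem:3} and is strictly more work.
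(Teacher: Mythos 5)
Your proposal is correct and follows essentially the same argument as the paper: the same case split on $n<m$ versus $n\geq m$, with the second case handled by rewriting the partial-sum difference as the negative of the tail $\sum_{i>n}(p_{fg}(x_i)-p_g(x_i))$, which is nonpositive termwise. Your remark on the well-definedness of the tail series in the countably infinite case is a small extra precaution the paper leaves implicit, but it does not change the route.
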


Theorem \ref{theorem:3} indicates that the convex-composition loss function tends to allocate higher probabilities to the more probable samples, while simultaneously diminishing the probabilities assigned to less probable ones, resulting in a sharper optimal distribution. Corollary \ref{corollary:1} quantitatively establishes this observation, demonstrating that the incorporation of a convex function into the loss function effectively sharpens the optimal distribution, as evidenced by a reduction in the Shannon entropy of $p_{fg}$ compared with $p_g$. Furthermore, Corollary \ref{corollary:2} reveals an increase in the cumulative probability of the $n$ most probable samples. Consequently, text generation models are better equipped to capture the highly probable outputs without explicitly modeling the data distribution.

In the above analysis, we only assume the original loss $g$ to be an increasing concave function. By imposing specific conditions on $g$, we can derive more desirable properties from the optimal distribution $p_{fg}$, as demonstrated in the subsequent theorem:

\begin{theorem}\label{theorem:4}
Let $f$ be an increasing convex function and $g$ be an increasing concave function. If $g$ satisfies $g'''(x)\cdot g'(x) \geq g''(x)^2>0 $ for all $x \in (0,1)$, then the difference between $p_{fg}$ and $p_{g}$ exhibits a monotonic order: $p_{fg}(x_1) - p_{g}(x_1) \geq p_{fg}(x_2) - p_{g}(x_2) \geq ... \geq p_{fg}(x_{m-1}) - p_{g}(x_{m-1}) \geq 0$, where $m$ is the positive integer described in Theorem \ref{theorem:3}.
\end{theorem}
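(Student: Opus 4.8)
The plan is to characterize both optimal distributions by first-order conditions and then analyze the gap $p_{fg}-p_g$ as a smooth function of the data-probability level, reducing the asserted chain of inequalities to a single differential inequality. As in the proof of Theorem~\ref{theorem:3}, introducing a Lagrange multiplier for the normalization $\sum_i p_\theta(x_i)=1$ shows that, on the range of indices where the optimal probabilities are interior, $g'(p_g(x_i))=\lambda_g/p_{data}(x_i)$ and $(f\circ g)'(p_{fg}(x_i))=\lambda_{fg}/p_{data}(x_i)$, with $(f\circ g)'(q)=f'(g(q))\,g'(q)$. Since $g$ is strictly concave by hypothesis ($g''<0$), $g'$ is a decreasing bijection, so these relations express $p_g(x_i)$ and $p_{fg}(x_i)$ as the values at $a=p_{data}(x_i)$ of the smooth functions $P_g(a):=(g')^{-1}(\lambda_g/a)$ and $P_{fg}(a):=((f\circ g)')^{-1}(\lambda_{fg}/a)$. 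Writing $E(a):=P_{fg}(a)-P_g(a)$, and noting $p_{data}(x_1)>\cdots>p_{data}(x_{m-1})$, it suffices to prove that $E$ is nondecreasing on $[\,p_{data}(x_{m-1}),\,p_{data}(x_1)\,]$; Theorem~\ref{theorem:3} gives $E\ge 0$ at the endpoints, and the sign cannot flip in between because, as the computation below shows, $E'\ge 0$ at every point where $E=0$ (a zero of $E$ can only be crossed upward).

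Implicit differentiation of the two defining relations yields
\begin{equation}
E'(a)=-\frac{1}{a}\left[\frac{(f\circ g)'(P_{fg})}{(f\circ g)''(P_{fg})}-\frac{g'(P_g)}{g''(P_g)}\right],
\end{equation}
and, using $(f\circ g)''(q)=f''(g(q))\,g'(q)^2+f'(g(q))\,g''(q)$ and dividing numerator and denominator by $f'(g(q))g'(q)>0$,
\begin{equation}
\frac{(f\circ g)'(q)}{(f\circ g)''(q)}=\frac{1}{R_f(g(q))\,g'(q)+R_g(q)},\qquad
\frac{g'(q)}{g''(q)}=\frac{1}{R_g(q)},
\end{equation}
where $R_f:=f''/f'\ge 0$ and $R_g:=g''/g'=(\log g')'<0$ are the logarithmic derivatives of $f'$ and $g'$. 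On the interior range all the relevant denominators are negative (here $(f\circ g)''<0$, inherited from the setup of Theorem~\ref{theorem:3}), so tracking signs shows that $E'(a)\ge 0$ is equivalent to $R_g(P_g)-R_g(P_{fg})\le R_f(g(P_{fg}))\,g'(P_{fg})$, whose right-hand side is nonnegative.

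Now the hypothesis enters: $g'''g'\ge(g'')^2$ is exactly $(\log g')''\ge 0$, i.e.\ $R_g$ is nondecreasing. On the interval in question, Theorem~\ref{theorem:3} gives $P_{fg}(a)\ge P_g(a)$, hence $R_g(P_{fg})\ge R_g(P_g)$, so the left-hand side above is $\le 0\le$ the right-hand side, and therefore $E'\ge 0$ throughout $[\,p_{data}(x_{m-1}),\,p_{data}(x_1)\,]$. Consequently $E(p_{data}(x_1))\ge E(p_{data}(x_2))\ge\cdots\ge E(p_{data}(x_{m-1}))$, which is precisely $p_{fg}(x_1)-p_g(x_1)\ge\cdots\ge p_{fg}(x_{m-1})-p_g(x_{m-1})$, and the final term is $\ge 0$ by Theorem~\ref{theorem:3}.

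The main obstacle is the careful sign bookkeeping in the middle step — rewriting $(f\circ g)'/(f\circ g)''$ in terms of the ratios $R_f$ and $R_g$ and checking that each inequality reverses in the intended direction — together with justifying that on the range $i<m$ the minimizers are genuinely interior and pinned down by the stationarity conditions (the degenerate cases where $p_g(x_i)$ or $p_{fg}(x_i)$ reaches $0$ or $1$, and the concavity of $f\circ g$ on the operative interval), which is exactly where strict concavity of $g$ and the conclusions of Theorem~\ref{theorem:3} are used. The remainder is routine implicit differentiation.
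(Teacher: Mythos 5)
The central gap is your assumption that $(f\circ g)''<0$ on the operative range, which you describe as ``inherited from the setup of Theorem~\ref{theorem:3}''. It is not: Theorems~\ref{theorem:3} and \ref{theorem:4} assume only that $f$ is convex increasing and $g$ is concave increasing, and the whole point of the construction is that the composite may be less concave or outright convex --- the paper's own example $f(x)=e^{kx}$ composed with $g(p)=\log(p)/T$ gives $(f\circ g)(p)=p^{k/T}$, which is convex whenever $k\ge T$. Without $(f\circ g)''<0$ your machinery breaks at several points: $(f\circ g)'$ need not be injective, so $P_{fg}(a)=\bigl((f\circ g)'\bigr)^{-1}(\lambda_{fg}/a)$ is not well defined (different indices can sit on different monotonicity branches); the global minimizer $p_{fg}$ may lie on the boundary of the simplex, where the equal-multiplier stationarity conditions you rely on fail (Theorem~\ref{theorem:2} is exactly this regime, and even $p_g$ can hit the boundary when $g'(0^+)$ is finite); and the sign of the denominator $(f\circ g)''(P_{fg})$ in your formula for $E'(a)$, on which the direction of every subsequent inequality hinges, is no longer controlled --- second-order optimality only forces $p_{data}(x_i)(f\circ g)''+p_{data}(x_j)(f\circ g)''\le 0$ pairwise, not coordinatewise concavity. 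The paper's proof never touches $(f\circ g)''$: it perturbs the two offending coordinates of the discrete optimum, uses only first-order optimality of $p_g$ and $p_{fg}$, monotonicity of $f'$, concavity of $g$, and convexity of $h=\log g'$.

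A second, smaller gap is the continuity argument: ``$E'\ge 0$ at every point where $E=0$, so the sign cannot flip'' is not valid by itself (a function like $-\sin^2$ has derivative $0\ge 0$ at all of its zeros yet is negative everywhere else), and Theorem~\ref{theorem:3} gives $E\ge 0$ only at the discrete points $a=p_{data}(x_i)$, $i<m$, not on your interpolating continuum; since $f$ is merely convex, $f''$ may vanish on part of the range, so you cannot upgrade to strict $E'>0$ at zeros to repair the crossing argument, and you would need a genuine Gronwall-type or subinterval argument there. Your identification of the hypothesis $g'''g'\ge (g'')^2$ with convexity of $\log g'$ (monotone $R_g$) is exactly the right key idea and is the same place the condition enters the paper's proof, but to make the Lagrange/implicit-differentiation route rigorous you would additionally have to dispose of non-concave composites and boundary optima --- precisely the cases the paper's discrete exchange argument sidesteps.
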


This theorem provides a more granular description of the relative difference between $p_{fg}$ and $p_g$. When $p_{fg}$ decreases the probabilities assigned to less probable samples, it tends to reallocate this probability mass to the most probable samples. This enables text generation models to more accurately capture the most probable outputs. Note that the condition $g'''(x)\cdot g'(x) \geq g''(x)^2>0$ is not overly restrictive. For instance, the loss function $g=\log$ in MLE readily fulfills this condition:

\begin{equation}
\log'''(x)\cdot \log'(x) - log''(x)^2=\frac{2}{x^3}\cdot \frac{1}{x} - (-\frac{1}{x^2})^2=\frac{1}{x^4}>0.
\end{equation}
\subsubsection{Practical Applications}
The preceding theoretical analysis highlights the effectiveness of function composition. Here we turn to its practical applications and give some examples of convex-composition loss functions. The loss function in maximum likelihood estimation is typically the log-probability, and length normalization is often applied in practical usage, resulting in the loss $g(p_{\theta}(x))=\frac{\log(p_{\theta}(x))}{T}$, where $T$ denotes the sentence length. Common choices for increasing convex functions on $(-\infty, 0]$ include the exponential function $f(x)=e^{kx}, k\geq 0$ and the power function $f(x)=-(-x)^k, 0 \leq k\leq 1$. Through function composition, we can derive the following losses:
\begin{equation}
\label{eq:fg}
fg(p_{\theta}(x))=
    \begin{cases}
      p_{\theta}(x)^{\frac{k}{T}}, & f(x)=e^{kx} \\
       -(-\frac{\log(p_{\theta}(x))}{T})^k, & f(x)=-(-x)^k
      \end{cases}.
\end{equation}

The gradient of the convex-composition function is $f'(g(p_{\theta}(x))) \cdot g'(p_{\theta}(x))$. Compared to the gradient of the original loss $g'(p_{\theta}(x))$, it has an additional term $f'(g(p_{\theta}(x)))$ that can be interpreted as a weight for the loss. Given that $f$ is a convex function and $g$ is an increasing function, the weight $f'(g(p_{\theta}(x)))$ is larger for more probable samples, thereby directing the model's focus towards generating outputs with high probabilities. Specifically, the loss weights $f'(g(p_{\theta}(x)))$ associated with Equation \ref{eq:fg} are:
\begin{equation}
f'(g(p_{\theta}(x)))=
    \begin{cases}
    k \cdot p_{\theta}(x)^{\frac{k}{T}}, & f(x)=e^{kx} \\
    k \cdot (-\frac{\log(p_{\theta}(x))}{T})^{k-1}, & f(x)=-(-x)^k
      \end{cases},
\end{equation}
where the exponential function weights the sample by the prediction probability, and the power function weights the sample by the log-probability.

In practical applications, label smoothing \citep{szegedy2016rethinking,vaswani2017attention} is a widely used regularization technique for text generation models. The smoothing loss and log-probability loss are typically combined using a fixed hyperparameter $\epsilon_{ls}$. To preserve the ratio of smoothing loss to log-probability loss, we also apply the weight $f'(g(p_{\theta}(x)))$ to the smoothing loss before interpolating it with the convex-composition loss.

\section{Experiments}
\label{sec:exp}
\begin{figure}[t]
    \centering
    \includegraphics[width=0.7\linewidth]{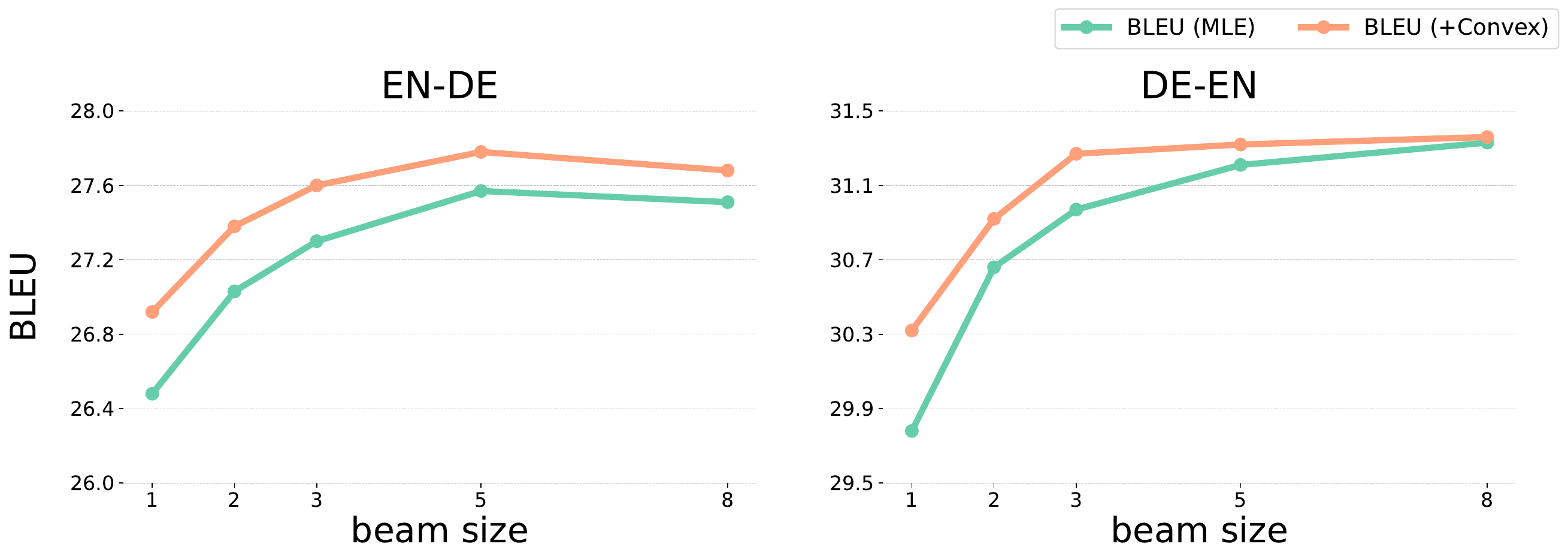}
    \caption{Translation quality (BLEU) of autoregressive model as beam size varies on WMT14 EN$\leftrightarrow$DE test set.}
    \label{fig:GAP}
\end{figure}
\begin{figure}[t]
    \centering
    \includegraphics[width=\linewidth]{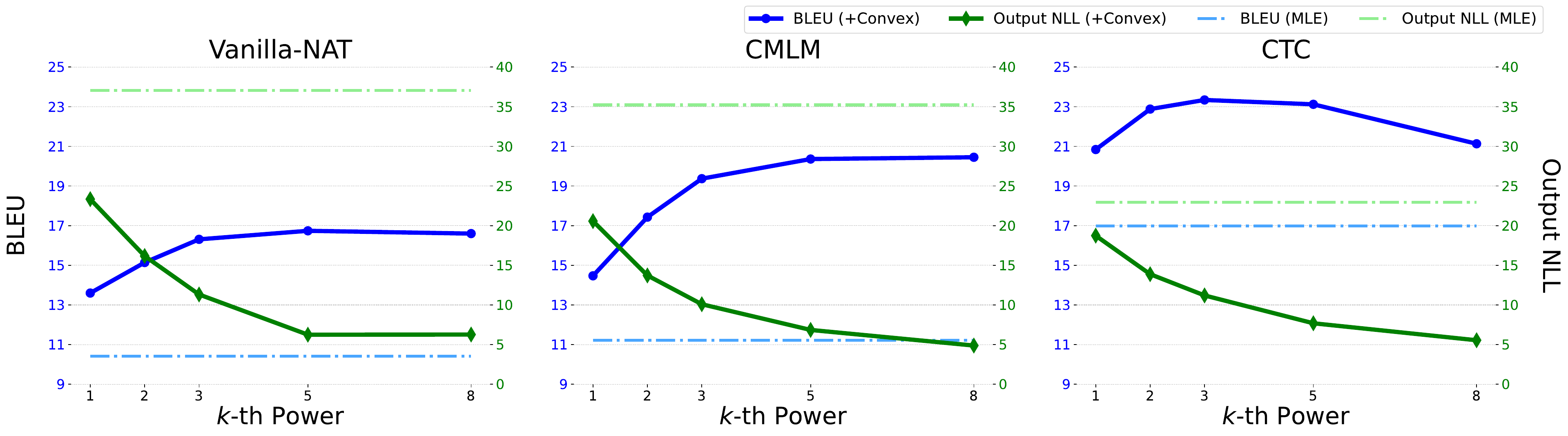}
    \caption{Translation quality (BLEU) and prediction confidence (Output NLL) of different NAT models as the exponent $k$ varies on WMT14 EN-DE test set.}
    \label{fig:BLEU&NLL}
\end{figure}
\begin{table}[t]
\caption{BLEU scores of autoregressive models on WMT14 EN$\leftrightarrow$DE test set with different decoding strategies.}
\label{table:Results_Decoding}
\small 
\begin{center}
\begin{tabular}{lcccccc}
\toprule
\multirow{2}{*}{\bf Model}  & \multicolumn{3}{c}{\bf EN-DE} & \multicolumn{3}{c}{\bf DE-EN} \\
                                                                    & \bf greedy       & \bf beam5   & \bf $\Delta$    & \bf greedy       & \bf beam5 & \bf $\Delta$ \\ 
\midrule
Transformer \citep{DBLP:conf/nips/VaswaniSPUJGKP17} &26.48  &27.57 & 1.09 & 29.78 & 31.21 & 1.43\\
\midrule
\bf {Transformer + Convex} & \bf 26.92 & \bf 27.78 & 0.86 & \bf 30.32  & \bf 31.33 & 1.01\\
\bottomrule

\end{tabular}
\end{center}
\end{table}
\begin{table}[t]
\caption{ROUGE scores on CNN/DailyMail and XSum test sets. RG-1, RG-2, RG-L stand for ROUGE-1, ROUGE-2 and ROUGE-L scores.}
\label{table:Results_CNNDM}
\small 
\begin{center}
\begin{tabular}{lcccccc}
\toprule
\multirow{2}{*}{\bf Model}  &  \multicolumn{3}{c}{\bf CNN/DM} & \multicolumn{3}{c}{\bf XSUM} \\
 & \bf RG-1 & \bf RG-2 & \bf RG-L & \bf RG-1 & \bf RG-2 & \bf RG-L \\
\midrule
Transformer \citep{DBLP:conf/nips/VaswaniSPUJGKP17} & 39.03 & 15.98 & 35.88 & 31.04  &10.68  &24.77\\
\bf Transformer + Convex & \bf 39.56 & \bf 16.84 & \bf 36.26  & \bf 31.55 &\bf  11.13 &  \bf  25.09\\
\bottomrule 

\end{tabular}
\end{center}
\end{table}
\begin{table}[t]
\caption{BLEU and COMET scores on WMT14 EN$\leftrightarrow$DE test set.}
\label{table:Results_WMT}
\small 
\begin{center}
\begin{tabular}{lccccc}
\toprule
\multirow{2}{*}{\bf Model}  & \multirow{2}{*}{\bf Speedup} & \multicolumn{2}{c}{\bf EN-DE} & \multicolumn{2}{c}{\bf DE-EN} \\
               &                                                      & \bf BLEU       & \bf COMET       & \bf BLEU       & \bf COMET \\ 
\midrule
Transformer \citep{DBLP:conf/nips/VaswaniSPUJGKP17} & 1.0× & 27.57 & 82.76 & 31.21 & 82.98 \\
\midrule
Vanilla-NAT \citep{gu2017non} & 15.6× & 10.41 & 40.69 & 16.01 & 56.03 \\
\textbf{Vanilla-NAT + Convex} & 15.6× & \textbf{16.74} & \textbf{57.25} & \textbf{22.63} & \textbf{68.83} \\
\midrule
CMLM \citep{ghazvininejad2019maskpredict}  & 15.0× & 11.22 & 43.62 & 15.26 & 56.63\\
\textbf{CMLM + Convex}  & 15.0× & \textbf{20.45} & \textbf{65.99} & \textbf{19.11} & \textbf{63.54} \\
\midrule
CTC \cite{saharia-etal-2020-non} & 14.7× & 16.98 & 54.77 & 20.53 & 66.26 \\
\textbf{CTC + Convex} & 14.7× & \textbf{23.34} & \textbf{67.38} & \textbf{26.68} & \textbf{74.75} \\
\bottomrule

\end{tabular}
\end{center}
\end{table}
\begin{table}[t]
\caption{Prediction confidence (Output NLL) and generation fluency (External PPL) of CMLM on WMT14 EN-DE test set.}
\label{table:Results_Corr}
\small 
\begin{center}
\begin{tabular}{lccccc}
\toprule
\bf k-th Power &\bf 1 &\bf 2 &\bf 3 &\bf 5 &\bf 8\\
\midrule
\bf Confidence (Output NLL) $\downarrow$ & 20.57 &  13.72 &  10.09  & 6.85 &  4.88\\
\bf Fluency (External PPL) $\downarrow$ & 939.34 &  481.08 & 315.54  & 213.84 &  218.68\\
\bottomrule 

\end{tabular}
\end{center}
\end{table}
\begin{table}[t]
\caption{BLEU scores of Alpaca fine-tuned large language models on WMT22 test sets.}
\label{table:Results_LLAMA_MT}
\small 
\begin{center}
\begin{tabular}{lccccc}
\toprule
{\bf Model}  & {\bf EN-DE} & {\bf DE-EN} & {\bf EN-ZH} & {\bf ZH-EN} &{\bf AVG}\\
\midrule
LLaMA-7B&  25.42 & 17.93 & 13.86 & 13.17 & 17.59 \\
\textbf{LLaMA-7B + Convex}&  \textbf{27.57} & \textbf{19.88} & \textbf{15.00} & \textbf{15.28} & \textbf{19.43} \\
\midrule
LLaMA-13B&  \textbf{29.35}  & 21.74  & 15.58  & 16.27  & 20.74 \\
\textbf{LLaMA-13B + Convex}&  28.75 & \textbf{22.20} & \textbf{16.25} & \textbf{20.08} & \textbf{21.82} \\
\bottomrule
\end{tabular}
\end{center}
\end{table}
\begin{table}[t]
\caption{ROUGE scores of Alpaca fine-tuned large language models on CNN/DailyMail.}
\label{table:Results_LLAMA_CNNDM}
\small 
\begin{center}
\begin{tabular}{lccccc}
\toprule
{\bf Model}  & {\bf RG-1} & {\bf RG-2} & {\bf RG-L} &{\bf AVG}\\
\midrule
LLaMA-7B&  28.66 & 12.49 & 26.37 & 22.51 \\
\textbf{LLaMA-7B + Convex}&  \textbf{32.76} & \textbf{14.67} & \textbf{30.00} & \textbf{25.81} \\
\bottomrule
\end{tabular}
\end{center}
\end{table}

To validate the practical advantages of loss functions with sharper optimal distributions, we conduct experiments on basic autoregressive (AR) models, non-autoregressive (NAR) models, and large language models (LLMs). We evaluate their performance on two representative closed-ended text generation tasks, including neural machine translation and text summarization. Following the theoretical analysis in previous sections, we combine the exponential function with standard log-probability, i.e. $\mathcal{L}_{f}(\theta)=-\mathbb{E}_{x\sim p_{data}(x)}[p_{\theta}(x)^{\frac{k}{T}}]$, as our training objective in the following experiments. We have also attempted to combine the power function with log-probability as training objective. We found that the power form encountered some difficulties during training, leading to worse performance compared to the exponential form. Due to the space limit, we leave the results under this setting in Appendix \ref{sec:power}.

Our theoretical analysis suggests that the model trained by convex-composition loss tends to predict a sharper distribution, in which the probability mass is more heavily allocated to the most probable samples. Such property leads the model becoming more confident about its prediction and facilitates the de-facto maximum a posteriori (MAP) decoding framework in closed-ended text generation tasks. In the following, we will discuss and validate the effects of convexity in the context of AR models, NAR models, and LLMs respectively. More details of settings can be found in Appendix \ref{sec:settings}.

\subsection{Effects of Convexity on Autoregressive Models}

In the context of autoregressive models, a model distribution trained with a convex-composition loss tends to exhibit fewer modes and a sharper distribution, thereby facilitating the task of approximate search algorithms in identifying the most likely output. We validate this conjecture by investigating the performance of greedy and beam search when trained with standard MLE and convex-composition loss in translation and summarization tasks. For translation task, we vary the beam size from $\{1,2,3,5,8\}$, where beam size 1 can be considered as greedy search. 

Figure \ref{fig:GAP} visualizes the results in terms of BLEU \citep{papineni-etal-2002-bleu}, with precise numerical values given in Table \ref{table:Results_Decoding}. We observe a consistent improvement in translation quality when using convex-composition losses compared to MLE, and a similar trend is observed in summarization tasks as detailed in Table \ref{table:Results_CNNDM}. These results provide experimental support that the composition with convex function promotes those approximate searching algorithms to perform argmax decoding. Meanwhile, Table \ref{table:Results_Decoding} exhibits a diminishing gap between greedy search and beam search when equipped with convex-composition loss. This outcome can be attributed to the efficacy of the convex function in reducing the complexity of the model distribution, as described in Theorem \ref{theorem:3}. Such property amplifies the potential of lightweight approximate decoding algorithms within the autoregressive structure, a desirable trait in the context of modern, computation-intensive autoregressive neural networks.

\subsection{Effects of Convexity on Non-autoregressive Models}
\label{exp:nar}
Non-autoregressive models face the challenge of multi-modality, where fitting a data distribution with multiple target modes exceeds the capabilities of NAR models. Therefore, the mode collapse property of convex-composition loss would be beneficial to NAR models. Likelihood training will force the model to ignore sequential dependency, resulting in disfluency in its output (e.g., token repetition and omission). In contrast, convex-composition loss would encourage model to allocate most of its probability mass to the best among all proper candidates. Such property is able to help NAR model avoid generating a mixture of modes, thereby alleviating disfluency issues.

To demonstrate its effectiveness, we investigate the performance of convex-composition loss on three representative NAR models, including Vanilla-NAT \citep{gu2017non}, CMLM \citep{ghazvininejad2019maskpredict} and CTC \citep{saharia-etal-2020-non}. Considering most of the NAR researches are restricted in the field of translation, we only conduct experiments on translation dataset. In addition to translation quality, we also assess the prediction confidence and generation fluency of NAR outputs. The prediction confidence is measured with negative log-likelihood of its generation and the fluency is measured by an external pre-trained language model \footnote{\url{https://github.com/facebookresearch/fairseq/tree/main/examples/language_model}}. We use the PPL value reported by the language model to quantify the fluency of generation. The exponent hyperparameter $k$ is manipulated to adjust the convexity of our composite loss function. 

The results are shown in Figure \ref{fig:BLEU&NLL} and Table \ref{table:Results_WMT}. We observe a consistent improvement in translation quality across all NAT models with a maximum improvement of 9+ BLEU points on CMLM. Meanwhile, Figure \ref{fig:BLEU&NLL} implies that prediction confidence significantly gains and the gain increases as $k$ gets larger. Such phenomenon reveals a descending trend of model entropy as applying convex function on loss, which is consistent with Corollary \ref{corollary:1}. More importantly, we note a strong correlation between model entropy and generation fluency in Table \ref{table:Results_Corr}, providing clear evidence that the mode collapse property of convex function indeed relieves NAR model from multi-modality problem.

\subsection{Effects of Convexity on Large Language Models}
Large language models have demonstrated remarkable capabilities in various applications, including both open-ended and closed-ended text generation tasks. For open-ended tasks, stochastic decoding methods such as temperature sampling are commonly employed to produce responses. In contrast, deterministic decoding methods like beam search are favored for closed-ended tasks like machine translation \citep{jiao2023parrot,zeng2023tim,liu2023instruction}. Given that the convex-composition loss enhances the model's ability to identify highly probable sentences, incorporating this loss function into the LLMs' training process  would be beneficial to closed-ended generation tasks.

To demonstrate its effectiveness, we assess the performance of LLMs in machine translation (Table \ref{table:Results_LLAMA_MT}) and summarization (Table \ref{table:Results_LLAMA_CNNDM}). Table \ref{table:Results_LLAMA_MT} reveals that the LLaMA-7B model, incorporating convex-composition loss, surpasses the baseline model across all language pairs, achieving an average improvement of 1.84 BLEU. Likewise, the LLaMA-13B model with convex-composition loss outperforms the baseline model in three out of four language pairs. Table \ref{table:Results_LLAMA_CNNDM} further demonstrates the effectiveness of our method in text summarization. Due to memory limitations, we are only able to decode the text summarization dataset using the LLaMA-7B model.

\section{Related Work}
\textbf{Alternative Loss Functions} \ Maximum likelihood estimation has become the dominant approach for learning text generation models, but it also comes with certain limitations. Various alternative loss functions have been proposed to improve the training process from different perspectives. Regarding the exposure bias problem \citep{ranzato2015sequence} that autoregressive models are exposed to different distributions during training and inference, \citet{NIPS2015_e995f98d,mihaylova-martins-2019-scheduled,zhang-etal-2019-bridging} proposed to reduce this gap by sampling from the model's own predictions during training. Another issue with text generation models is text degeneration: output text may be bland, incoherent, or gets stuck in repetitive loops \citep{Holtzman2020The}. To avoid text degeneration, \citet{dieng2019learning} proposed a learning criterion termed reflective likelihood to penalize incoherent outputs, and \citet{Welleck2020Neural} proposed unlikelihood training that forces unlikely generations to be assigned lower probability by the model. Additionally, to address the discrepancy between likelihood training and evaluation metrics, loss functions that more directly optimize evaluation metrics are proposed. \citet{ranzato2015sequence} utilized the reinforcement learning technique to train recurrent neural networks with sequence level objectives. \citet{shen-etal-2016-minimum} proposed to optimize evaluation metrics with minimum risk training. \citet{NIPS2016_2f885d0f,edunov-etal-2018-classical} incorporated evaluation metrics into the maximum likelihood training objective. There are also efforts on learning a more focused distribution for text generation models \citep{pang2021text,zhang-etal-2023-mixce,stahlberg-kumar-2022-jam}. However, these approaches primarily reformulate the loss function at the word level, which is insufficient for guiding the model towards identifying high-probability sentences at the sentence level. In contrast, our method explicitly trains the model to concentrate on generating highly probable sentences.

\textbf{Reinforcement Learning} \ Our work aligns closely with reinforcement learning (RL) based training techniques for text generation \citep{williams1992simple,sutton2000policy}. While RL techniques typically maximize the expected reward by concentrating the probability mass on the sequence with the highest reward, our approach strives to put all the probability mass on the most likely sequence. RL allows for text generation models to optimize discrete evaluation metrics, which have wide usage in text generation tasks, including machine translation \citep{ranzato2015sequence,bahdanau2017an}, text summarization \citep{paulus2018a}, image captioning \citep{8099614}, dialogue generation \citep{li-etal-2016-deep}, etc. Furthermore, RL can be integrated with Generative Adversarial Networks \citep{yu2017seqgan} and can leverage human feedback for training \citep{NEURIPS2020_1f89885d,ouyang2022training}.

\textbf{Loss Functions for NAR Models} \ The limitation of maximum likelihood estimation is amplified in non-autoregressive (NAR) models since they inherently lack the capability to fit the data distribution \citep{pmlr-v162-huang22k}. To address this issue, researchers have developed loss functions specifically designed for NAR models, guiding them towards generating coherent text. \citet{shao-etal-2019-retrieving,DBLP:journals/corr/abs-2106-08122,ding-etal-2021-progressive} proposed to train NAR models with sequence-level objective functions. \citet{Aligned,DBLP:conf/icml/DuTJ21} relaxed the alignment restriction in the cross-entropy loss. \citet{DBLP:conf/aaai/ShaoZFMZ20,DBLP:conf/nips/Shao022,ma2023fuzzy} proposed n-gram based differentiable training objectives to optimize n-gram prediction accuracy. However, these methods lack theoretical guarantees for the shape of optimal distribution.

\section{Conclusion}
This paper investigates the theoretical properties and practical applications of a novel class of training objectives based on convex functions. Our findings show that convex functions can sharpen the optimal distribution, enabling text generation models to focus on highly probable outputs without having to estimate the entire data distribution. Experiments on various text generation tasks and models verify our theoretical analysis and demonstrate the practical effectiveness of our approach.

\section{Acknowledgement}
We thank the anonymous reviewers for their insightful comments.

\clearpage
\bibliographystyle{abbrvnat}
\bibliography{custom}

\newpage
\appendix
\setcounter{theorem}{0}
\setcounter{corollary}{0}
\section{Proofs}
\label{sec:proof}
\subsection{Proof of Theorem 1}
\begin{theorem}
Given an arbitrary differentiable and increasing function $f$, the optimal distribution $p_f$ satisfies $p_f(x_1) \geq p_f(x_2) \geq \cdots \geq p_f(x_{i}) \cdots $.
\end{theorem}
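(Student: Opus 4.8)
The plan is to prove this by a rearrangement (exchange) argument over the probability simplex on $\mathcal{X}$, which is the set of distributions $p_\theta$ being optimized. Recall from Assumption~\ref{ass:2} that the samples are indexed so that $p_{data}(x_1) > p_{data}(x_2) > \cdots$, and that the loss can be written as $\mathcal{L}_f(\theta) = -\sum_i p_{data}(x_i)\, f(p_\theta(x_i))$. We take as given (per the discussion preceding the theorem) that an optimal distribution $p_f$ exists.

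First I would argue by contradiction. Suppose $p_f$ minimizes $\mathcal{L}_f$ but the claimed ordering fails, i.e.\ there exist indices $i < j$ with $p_f(x_i) < p_f(x_j)$. Define a new distribution $p'$ that agrees with $p_f$ everywhere except that the two values at $x_i$ and $x_j$ are swapped: $p'(x_i) = p_f(x_j)$, $p'(x_j) = p_f(x_i)$, and $p'(x_k) = p_f(x_k)$ for $k \notin \{i,j\}$. Since $p'$ is a permutation of the entries of $p_f$, it is nonnegative and sums to $1$, hence still a valid distribution.

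Next I would compute the loss difference; every term except those indexed by $i$ and $j$ cancels, leaving
\begin{equation}
\mathcal{L}_f(p') - \mathcal{L}_f(p_f) = \bigl(p_{data}(x_i) - p_{data}(x_j)\bigr)\bigl(f(p_f(x_i)) - f(p_f(x_j))\bigr).
\end{equation}
The first factor is strictly positive because $i < j$, and the second factor is strictly negative because $f$ is (strictly) increasing and $p_f(x_i) < p_f(x_j)$. Hence $\mathcal{L}_f(p') < \mathcal{L}_f(p_f)$, contradicting optimality of $p_f$. Therefore $p_f(x_i) \geq p_f(x_j)$ for every pair $i < j$, which is exactly the stated monotonicity. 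Note the argument handles any pair directly, so no induction on adjacent indices is needed.

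The argument is short, and the only points requiring care are: (i) checking that swapping two coordinates stays in the feasible set — immediate for the simplex, and unaffected by whether $\mathcal{X}$ is finite or countably infinite, since only two entries move and the sum is preserved; and (ii) the precise sense of ``increasing'' — if $f$ were merely weakly increasing, the swap would only give $\mathcal{L}_f(p') \le \mathcal{L}_f(p_f)$, in which case one concludes instead that some optimal distribution satisfies the ordering, or invokes a tie-breaking convention. I expect point (ii) to be the only subtlety worth stating carefully; the rest is routine.
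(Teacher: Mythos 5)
Your proposal is correct and is essentially the same argument as the paper's: a contradiction via swapping the two out-of-order coordinates and observing that the loss strictly decreases because $(p_{data}(x_i)-p_{data}(x_j))$ and $(f(p_f(x_i))-f(p_f(x_j)))$ have opposite signs. Your remark about weakly versus strictly increasing $f$ is a fair caveat that the paper glosses over, but it does not constitute a different approach.
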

\begin{proof}
We prove this theorem by contradiction. Suppose there exist an indice $(i,j)$ such that $i<j$ and $p_f(x_i) < p_f(x_j)$. In this case, we can construct a distribution $p_f'$ with a lower loss than $p_f$, which contradicts the optimality of $p_f$. Specifically, let $p_f'$ be identical to $p_f$ except for the changes $p_f'(x_i)=p_f(x_j)$ and $p_f'(x_j)=p_f(x_i)$. We denote the loss of a model distribution $p$ as $\mathcal{L}_f(p_{\theta}=p)$ and show that $p_f'$ has a lower loss:
\begin{equation}
\mathcal{L}_f(p_{\theta}=p_f') = \mathcal{L}_f(p_{\theta}=p_f) + (p_{data}(x_j) - p_{data}(x_i))\cdot (f(p_f(x_j)) - f(p_f(x_i))) < \mathcal{L}_f(p_{\theta}=p_f).
\end{equation}

This inequality contradicts the assumption that $p_f$ minimizes $\mathcal{L}_f$, thereby proving the theorem.
\end{proof}
\subsection{Proof of Theorem 2}
\begin{theorem}
If $f$ is an increasing convex function on $[0,1]$, then the optimal distribution $p_f$ is a one-hot distribution that $p_f(x_1)=1$ and $p_f(x_i)=0, i > 1$.
\end{theorem}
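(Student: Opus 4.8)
The plan is to prove the statement directly by exhibiting the minimizer: show that the one-hot distribution $e_1$ concentrated on $x_1$ minimizes $\mathcal{L}_f$, so that $p_f = e_1$. Equivalently, I would show that $e_1$ maximizes $S(p) := \sum_i p_{data}(x_i)\, f(p(x_i))$ over all distributions $p$ on $\mathcal{X}$. Since $f$ is increasing on $[0,1]$, every term $f(p(x_i))$ lies in $[f(0), f(1)]$, so $S(p)$ converges absolutely and the rearrangements below are legitimate even when $\mathcal{X}$ is countably infinite.

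First I would invoke convexity of $f$ on $[0,1]$: writing $t = (1-t)\cdot 0 + t\cdot 1$ for $t\in[0,1]$ gives the chord bound $f(p(x_i)) \le (1-p(x_i))\, f(0) + p(x_i)\, f(1)$. Multiplying by $p_{data}(x_i)$, summing over $i$, and using $\sum_i p_{data}(x_i) = 1$ and $\sum_i p(x_i) = 1$, the right-hand side collapses to
\[
S(p) \le f(0) + \bigl(f(1) - f(0)\bigr)\sum_i p_{data}(x_i)\, p(x_i).
\]
Next, since $p_{data}(x_1)$ is the largest sample probability and $f(1) - f(0) \ge 0$ (as $f$ is increasing), I would bound $\sum_i p_{data}(x_i)\, p(x_i) \le p_{data}(x_1)\sum_i p(x_i) = p_{data}(x_1)$, giving $S(p) \le f(0) + (f(1) - f(0))\, p_{data}(x_1)$. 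Finally, a one-line computation shows $e_1$ attains this bound: $S(e_1) = p_{data}(x_1)\, f(1) + (1 - p_{data}(x_1))\, f(0) = f(0) + (f(1)-f(0))\, p_{data}(x_1)$. Hence $e_1$ is a maximizer of $S$, i.e. a minimizer of $\mathcal{L}_f$, which proves that the one-hot distribution is optimal.

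The main subtlety — and the place to be careful — is the uniqueness implicit in the phrasing ``the optimal distribution $p_f$ is a one-hot distribution,'' namely that \emph{every} minimizer equals $e_1$. For this I would trace the two inequalities above. Equality in the second step requires $\sum_i p_{data}(x_i)\, p(x_i) = p_{data}(x_1)$; writing $p_{data}(x_1) - \sum_i p_{data}(x_i)\, p(x_i) = \sum_{i\ge 2}\bigl(p_{data}(x_1) - p_{data}(x_i)\bigr)\, p(x_i)$ and using the strict gap $p_{data}(x_1) > p_{data}(x_i)$ for $i\ge 2$ from Assumption 2, this sum is strictly positive whenever $p(x_j) > 0$ for some $j\ge 2$ — so equality forces $p(x_1) = 1$, provided $f(1) > f(0)$. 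The degenerate case $f(1) = f(0)$ makes $f$ constant on $[0,1]$, whence $\mathcal{L}_f$ is constant and the claim should be read as ``$e_1$ is among the optimal distributions''; I would flag this explicitly, or simply note that requirement (3) on $f$ is meant in the non-degenerate sense. An alternative proof, closer to the style of Theorem 1, is a mass-transport argument: given a non-one-hot $p$, move all of some $p(x_j) > 0$ ($j\ge 2$) onto $x_1$; convexity gives $f(p(x_1)+p(x_j)) - f(p(x_1)) \ge f(p(x_j)) - f(0) \ge 0$, and combined with $p_{data}(x_1) > p_{data}(x_j)$ this shows $S$ does not decrease, strictly so once $f$ is strictly increasing on the relevant interval — but chaining infinitely many such transports needs a limiting argument, which is why I prefer the global chord bound.
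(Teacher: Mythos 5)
Your proof is correct, and it takes a genuinely different route from the paper's. The paper argues locally and by contradiction: assuming some $p_f(x_i)>0$ with $i>1$, it shifts an infinitesimal amount of mass from $x_i$ to $x_1$ and shows, using Theorem 1 together with $f'(p_f(x_1)) \geq f'(p_f(x_i)) > 0$ and $p_{data}(x_1) > p_{data}(x_i)$, that the derivative of the loss in the perturbation parameter is strictly negative at zero, contradicting optimality. You instead give a direct global argument: the chord bound $f(t) \leq (1-t)f(0)+t f(1)$ combined with the estimate $\sum_i p_{data}(x_i)\,p(x_i) \leq p_{data}(x_1)$ produces a closed-form upper bound on $S(p)$ that your one-hot distribution $e_1$ attains. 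Your approach buys three things the paper's does not: it proves \emph{existence} of a minimizer (the paper's contradiction argument only constrains minimizers, taking their existence for granted), it avoids differentiability of $f$ and any appeal to Theorem 1, and your equality analysis yields uniqueness under the weak condition $f(1)>f(0)$, handling the countably infinite case without any limiting or chaining argument. What the paper's perturbation argument buys is uniformity: the same mass-transfer template is reused verbatim in the proofs of Theorems 3 and 4. Your caveat about the degenerate case $f(1)=f(0)$ is not a weakness relative to the paper — the paper's step asserting $f'(p_f(x_i))>0$ silently excludes the same constant-$f$ degeneracy that you flag explicitly.
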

\begin{proof}
We prove this theorem by contradiction. Suppose $p_f$ is not the one-hot distribution described above, then there must exist an index $i>1$ such that $p_f(x_i) > 0$. In this case, we can construct a distribution $p_f'$ with a lower loss than $p_f$, which contradicts the optimality of $p_f$. Specifically, let $p_f'$ be identical to $p_f$ except for the changes $p_f'(x_1)=p_f(x_1)+\alpha$ and $p_f'(x_i)=p_f(x_i)-\alpha$, where $0 \leq \alpha \leq p_f(x_i)$. Then we can calculate the gradient of loss $\mathcal{L}_f(p_{\theta}=p_f')$ with respect to $\alpha$:
\begin{equation}
\frac{\partial \mathcal{L}_f(p_{\theta}=p_f')}{\partial \alpha}\bigg|_{\alpha=0}=p_{data}(x_i) \cdot f'(p_f(x_i)) - p_{data}(x_1)\cdot f'(p_f(x_1)).
\end{equation}
We can analyze the above equation via the following steps:
\begin{equation}
\begin{cases}
    1.\ p_f(x_1) \geq p_f(x_i) \text{, from theorem \ref{theorem:1}}\\
    2.\ f'(p_f(x_1)) \geq f'(p_f(x_i)) > 0 \text{, from step 1 and the convexity of $f$}\\
    3.\ p_{data}(x_1)>p_{data}(x_i)>0\\
    4.\ \frac{\partial \mathcal{L}_f(p_{\theta}=p_f')}{\partial \alpha}\bigg|_{\alpha=0}<0 \text{, from steps 2,3}
\end{cases}.
\end{equation}
The above reasoning shows the loss can be further reduced, which contradicts the assumption that $p_f$ minimizes $\mathcal{L}_f$ and proves the theorem.
\end{proof}

\subsection{Proof of Theorem 3}
\begin{theorem}
Let $f$ be an increasing convex function and $g$ be an increasing concave function. Then, there exists a positive integer $m$ such that the following inequalities hold:
\begin{enumerate}
\item $p_{fg}(x_i) \geq p_g(x_i)$ for all $i < m$,
\item $p_{fg}(x_i) \leq p_g(x_i)$ for all $i \geq m$.
\end{enumerate}

\end{theorem}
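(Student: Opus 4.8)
The plan is a proof by contradiction built on a mass-transfer (exchange) argument that plays the optimality of $p_{fg}$ against the optimality of $p_g$. Write $t_i := p_{data}(x_i)$, so $t_1 > t_2 > \cdots$ by Assumption~\ref{ass:2}, and recall from Theorem~\ref{theorem:1} that $p_{fg}$ and $p_g$ are both non-increasing, so each of their supports is an initial segment of the index set. The conclusion to be proved is exactly that the sign of $p_{fg}(x_i) - p_g(x_i)$ switches (weakly) from nonnegative to nonpositive at most once, so I would assume it fails. Since $p_{fg}$ and $p_g$ both sum to $1$, a short combinatorial step then produces indices $a < b$ with $p_{fg}(x_a) < p_g(x_a)$ and $p_{fg}(x_b) > p_g(x_b)$: let $a$ be the first index with $p_{fg}(x_a) < p_g(x_a)$ (such $a$ exists, otherwise $p_{fg} = p_g$), and failure of the single crossing forces some $b > a$ with $p_{fg}(x_b) > p_g(x_b)$. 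In particular $p_g(x_a) > 0$ and $p_{fg}(x_b) > 0$; since $b$ lies in the support of $p_{fg}$ (an initial segment) so does $a$, and moreover $p_{fg}(x_a) > p_{fg}(x_b)$ \emph{strictly}, since if they were equal, transferring a bit of $p_{fg}$-mass from $x_b$ to $x_a$ would change $\mathcal{L}_{fg}$ by $-\epsilon\,(t_a - t_b)\,(fg)'(p_{fg}(x_b)) < 0$ to first order, contradicting optimality of $p_{fg}$.

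Next I would extract two first-order inequalities. Transferring mass $\epsilon$ from $x_a$ to $x_b$ in $p_g$ is a feasible perturbation that cannot decrease $\mathcal{L}_g$, which yields $t_a\,g'(p_g(x_a)) \geq t_b\,g'(p_g(x_b))$ (when $p_g(x_b) = 0$ this is read with the one-sided derivative, which must be finite, else $p_g$ would not be optimal). Symmetrically, transferring mass $\epsilon$ from $x_b$ to $x_a$ in $p_{fg}$ cannot decrease $\mathcal{L}_{fg}$, so $t_b\,(fg)'(p_{fg}(x_b)) \geq t_a\,(fg)'(p_{fg}(x_a))$, i.e. $t_b\,f'(g(p_{fg}(x_b)))\,g'(p_{fg}(x_b)) \geq t_a\,f'(g(p_{fg}(x_a)))\,g'(p_{fg}(x_a))$. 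Since $g$ is concave, $g'$ is non-increasing; combining this with $p_{fg}(x_a) < p_g(x_a)$ and $p_{fg}(x_b) > p_g(x_b)$ upgrades the $p_g$-inequality to the chain $t_a\,g'(p_{fg}(x_a)) \geq t_a\,g'(p_g(x_a)) \geq t_b\,g'(p_g(x_b)) \geq t_b\,g'(p_{fg}(x_b)) > 0$.

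I would then combine the two. Dividing the $p_{fg}$-inequality by the positive quantity $t_a\,g'(p_{fg}(x_a))$ and using $t_b\,g'(p_{fg}(x_b)) \leq t_a\,g'(p_{fg}(x_a))$ from the chain gives $f'(g(p_{fg}(x_a))) \leq f'(g(p_{fg}(x_b)))$. But $p_{fg}(x_a) > p_{fg}(x_b)$ forces $g(p_{fg}(x_a)) > g(p_{fg}(x_b))$ by monotonicity of $g$, and then convexity of $f$ (so $f'$ increasing) gives $f'(g(p_{fg}(x_a))) \geq f'(g(p_{fg}(x_b)))$ --- strictly so when $f$ is strictly convex --- contradicting the previous inequality. (If $f$ is convex but affine on the range in question, then $fg$ there shares an optimum with $g$ and the assertion is vacuous.) Hence no such pair $a < b$ exists, and taking $m$ to be the first index with $p_{fg}(x_m) < p_g(x_m)$ (or $m = 1$ if $p_{fg} = p_g$) delivers both inequalities of the theorem.

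I expect the main obstacle to be the bookkeeping in the combination step: each optimality inequality must be pushed through the monotone derivatives $g'$ (from concavity) and $f'$ (from convexity) in the right direction, one must track which quantities are strictly positive so the divisions and the final strict comparison are valid, and one must dispose of the boundary case where $p_g(x_b) = 0$, so that $g'$ there is a one-sided derivative that needs to be argued finite. The conceptual core is simple, though: single crossing is a monotonicity-in-$i$ statement about $p_{fg}(x_i) - p_g(x_i)$, and what makes the exchange argument close is the monotonicity of the two optimal distributions (Theorem~\ref{theorem:1}), sharpened to strict monotonicity on their supports.
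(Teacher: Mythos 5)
Your proposal is correct and follows essentially the same route as the paper's proof: contradiction via a pair $i<j$ with $p_{fg}(x_i)<p_g(x_i)$ and $p_{fg}(x_j)>p_g(x_j)$, the first-order mass-transfer condition for $p_g$ giving $p_{data}(x_i)g'(p_g(x_i))\geq p_{data}(x_j)g'(p_g(x_j))$, concavity of $g$ to pass to the chain at the $p_{fg}$ values, and Theorem~\ref{theorem:1} plus convexity of $f$ to compare the weights $f'(g(\cdot))$, contradicting the optimality of $p_{fg}$ under the same perturbation. The only differences are cosmetic refinements (strictness of $p_{fg}(x_i)>p_{fg}(x_j)$, the one-sided-derivative boundary case, and the affine-$f$ caveat), which the paper's proof simply glosses over.
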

\begin{proof}
We prove this theorem by contradiction. Assuming the theorem does not hold, there must exist an indice $(i,j)$ with $i<j$, $p_{fg}(x_i) < p_g(x_i)$, and $p_{fg}(x_j) > p_g(x_j)$. In this case, we can construct a distribution $p_{fg}'$ with a lower loss than $p_{fg}$, which contradicts the optimality of $p_{fg}$. 

First, we can establish the following inequality from the optimality of $p_g$:
\begin{equation}
\label{ieq}
p_{data}(x_i)\cdot g'(p_{g}(x_i)) \geq p_{data}(x_j)\cdot g'(p_{g}(x_j)).
\end{equation}
Assume the above inequality does not hold, then we can further reduce the loss, which contradicts the optimality of $p_g$. Let $p_g'(x_i)=p_g(x_i)-\alpha$, and $p_g'(x_j)=p_g(x_j)+\alpha$. The gradient of loss $\mathcal{L}_{fg}$ with respect to $\alpha=0$ is $p_{data}(x_i)\cdot g'(p_{g}(x_i)) - p_{data}(x_j)\cdot g'(p_{g}(x_j)) < 0$, so we can further reduce the loss with a positive $\alpha$, proving inequality \ref{ieq} by contradiction.

Then, let $p_{fg}'$ be identical to $p_{fg}$ except for the changes $p_{fg}'(x_i)=p_{fg}(x_i)+\alpha$ and $p_{fg}'(x_j)=p_{fg}(x_j)-\alpha$, where $0 \leq \alpha \leq p_{fg}(x_j)$. We can calculate the gradient of loss $\mathcal{L}_{fg}(p_{\theta}=p_{fg}')$ with respect to $\alpha$:
\begin{equation}
\frac{\partial \mathcal{L}_{fg}(p_{\theta}=p_{fg}')}{\partial \alpha}\bigg|_{\alpha=0}=p_{data}(x_j) \cdot f'g(p_{fg}(x_j))\cdot g'(p_{fg}(x_j)) - p_{data}(x_i) \cdot f'g(p_{fg}(x_i))\cdot g'(p_{fg}(x_i)).
\end{equation}

We can analyze the above equation via the following steps:
\begin{equation}
\begin{cases}
    1.\ g'(p_{fg}(x_i)) \geq g'(p_{g}(x_i))\text{, from } p_{fg}(x_i) < p_g(x_i) \text{ and the concavity of $g$} \\
    2.\ g'(p_{fg}(x_j)) \leq g'(p_{g}(x_j)) \text{, from } p_{fg}(x_j) > p_g(x_j) \text{ and the concavity of $g$} \\
    3.\ p_{data}(x_i)\cdot g'(p_{g}(x_i)) \geq p_{data}(x_j)\cdot g'(p_{g}(x_j)) \text{, from inequality \ref{ieq}} \\
\end{cases}.
\end{equation}

Combining steps 1-3, we obtain:
\begin{equation}
\label{eq:theo3}
p_{data}(x_i) \cdot g'(p_{fg}(x_i)) \geq p_{data}(x_i) \cdot g'(p_{g}(x_i)) \geq p_{data}(x_j)\cdot g'(p_{g}(x_j)) \geq p_{data}(x_j) \cdot g'(p_{fg}(x_j)).
\end{equation}

Further, the following steps shows that the gradient is less than 0:
\begin{equation}
\begin{cases}
    1.\ p_{fg}(x_i) \geq p_{fg}(x_j) \text{, from theorem \ref{theorem:1}} \\
    2.\ f'g(p_{fg}(x_i)) \geq f'g(p_{fg}(x_j)) \text{, from step 1, the increasing property of $g$, and the convexity of $f$}\\
    3.\ \frac{\partial \mathcal{L}_{fg}(p_{\theta}=p_{fg}')}{\partial \alpha}\bigg|_{\alpha=0}<0 \text{, from step 2 and inequality \ref{eq:theo3}}
\end{cases}.
\end{equation}

The above reasoning shows the loss can be further reduced, which contradicts the assumption that $p_f$ minimizes $\mathcal{L}_f$ and proves the theorem.
\end{proof}

\vspace{5pt}
\begin{corollary}
The Shannon entropy of $p_{fg}$ is less than or equal to the Shannon entropy of $p_{g}$.
\end{corollary}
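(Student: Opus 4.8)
The plan is to show that $p_{fg}$ \emph{majorizes} $p_g$ and then invoke the Schur-concavity of Shannon entropy. First, from Theorem~\ref{theorem:3} I would derive the partial-sum domination $\sum_{i=1}^{n} p_{fg}(x_i) \geq \sum_{i=1}^{n} p_g(x_i)$ for every $n$ (this is exactly Corollary~\ref{corollary:2}, which can be established here independently): for $n < m$ it is immediate by summing the first family of inequalities in Theorem~\ref{theorem:3} term by term; for $n \geq m$ it follows by passing to complements, writing $\sum_{i=1}^n p(x_i) = 1 - \sum_{i>n} p(x_i)$ and applying the second family of inequalities in Theorem~\ref{theorem:3} to the tail indices $i > n \geq m$. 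Since Theorem~\ref{theorem:1} also guarantees that both $p_{fg}$ and $p_g$ are arranged in nonincreasing order of their entries, these partial-sum inequalities are precisely the statement $p_{fg} \succeq p_g$.

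Second, I would conclude using the fact that $H(p) = \sum_i \phi(p(x_i))$ with $\phi(t) = -t\log t$ concave is a Schur-concave functional, so majorization $p_{fg} \succeq p_g$ yields $H(p_{fg}) \leq H(p_g)$. If one prefers a self-contained argument rather than quoting Schur-concavity, the same conclusion follows by summation by parts: setting $d_i = p_g(x_i) - p_{fg}(x_i)$ and $D_k = \sum_{i\leq k} d_i \leq 0$, concavity of $\phi$ gives $\phi(p_g(x_i)) - \phi(p_{fg}(x_i)) \geq \phi'(p_g(x_i))\, d_i$; since $\phi'(t) = -\log t - 1$ is decreasing and $p_g(x_i)$ is nonincreasing in $i$, the coefficients $\phi'(p_g(x_i))$ are nondecreasing in $i$, and an Abel summation rewrites $\sum_i \phi'(p_g(x_i))\, d_i$ as $-\sum_i D_i\big(\phi'(p_g(x_{i+1})) - \phi'(p_g(x_i))\big)$, a sum of nonnegative terms. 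Either route gives $\sum_i \phi(p_g(x_i)) \geq \sum_i \phi(p_{fg}(x_i))$, which is the claim.

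The main obstacle is the countably infinite case: I need the series defining $H$ to be well behaved, the boundary term in the Abel summation to vanish (so that $D_N \to 0$ while $\phi'(p_g(x_N))$ remains controlled), and to dispose separately of the degenerate situation $H(p_g) = +\infty$, where the inequality is vacuous. Apart from this tail bookkeeping, the proof reduces entirely to Theorems~\ref{theorem:1} and~\ref{theorem:3} together with elementary properties of $t \mapsto -t\log t$.
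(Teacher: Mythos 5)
Your proof is correct, and it takes a recognizably different route from the paper's, although both ultimately rest on the same phenomenon (majorization). The paper argues via pairwise transfers: it defines $h(\Delta x) = -(x_1+\Delta x)\log(x_1+\Delta x) - (x_2-\Delta x)\log(x_2-\Delta x)$, notes $h'(\Delta x)<0$ for $x_1\geq x_2$, $\Delta x>0$, so moving mass from a smaller entry to a larger one decreases entropy, and then asserts that the passage from $p_g$ to $p_{fg}$ "can be viewed as a series of such adjustments." You instead make the majorization explicit: partial-sum domination $\sum_{i\leq n}p_{fg}(x_i)\geq\sum_{i\leq n}p_g(x_i)$ (re-deriving Corollary \ref{corollary:2} from Theorem \ref{theorem:3}, exactly as the paper does for that corollary), combined with the common nonincreasing ordering from Theorem \ref{theorem:1}, and then Schur-concavity of $\sum_i\phi(p(x_i))$ with $\phi(t)=-t\log t$, or your self-contained Abel-summation argument with the tangent inequality $\phi(p_g(x_i))-\phi(p_{fg}(x_i))\geq\phi'(p_g(x_i))\,d_i$ and nonpositive partial sums $D_k$. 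What your route buys is rigor at exactly the point the paper is hand-wavy: the paper never justifies that the move from $p_g$ to $p_{fg}$ decomposes into such transfers (which is essentially equivalent to the partial-sum statement you prove), and it silently ignores the countably infinite case, whereas you isolate the genuine tail issues (vanishing of the Abel boundary term, the degenerate case $H(p_g)=+\infty$) and dispose of them; note also that $D_N\to 0$ while $\phi'(p_g(x_N))$ may diverge, so the boundary term does need the control you flag, or one can simply quote the infinite-sequence version of Schur-concavity. What the paper's route buys is brevity and an intuitive picture of why entropy drops. Your derivation makes Corollary \ref{corollary:1} a formal consequence of Corollary \ref{corollary:2}, which is a cleaner logical organization than the paper's, where the two corollaries are proved independently.
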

\begin{proof}
The Shannon entropy of distribution $p$, denoted by $H_{p}$, is defined as $H_{p}=-\sum_{x}p(x)\log p(x)$. Consider a function $h({\Delta}x) = - (x_1+ {\Delta}x)\log(x_1+{\Delta}x) - (x_2-{\Delta}x)\log(x_2-{\Delta}x)$. It's first-order derivative $h'({\Delta}x)=\log(x_2-{\Delta}x) - \log(x_1+{\Delta}x)$. Assuming $x_1 \geq x_2$, we observe that $h'({\Delta}x) < 0$ when ${\Delta}x > 0$, so the entropy decreases when we reduce $x_2$ and increase $x_1$ accordingly. The transformation from $p_g$ to $p_{fg}$ can be viewed as a series of such adjustments, which implies that the Shannon entropy of $p_{fg}$ is less than or equal to the Shannon entropy of $p_{g}$.
\end{proof}

\vspace{5pt}
\begin{corollary}
For any $n \in \{1,2,...\}$, the sum of the probabilities of the $n$ most probable samples increases: $\sum_{i=1}^{n} p_{fg}(x_i) \geq \sum_{i=1}^{n} p_{g}(x_i)$.
\end{corollary}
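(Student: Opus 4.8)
The plan is to deduce the corollary directly from Theorem~\ref{theorem:3} together with the fact that both $p_{fg}$ and $p_g$ are probability distributions, hence have total mass $1$. Theorem~\ref{theorem:3} supplies an index $m$ such that $p_{fg}(x_i) \geq p_g(x_i)$ for $i < m$ and $p_{fg}(x_i) \leq p_g(x_i)$ for $i \geq m$. I would split the argument into two cases according to the position of $n$ relative to $m$.

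First, if $n < m$, then for every $i \in \{1,\dots,n\}$ we have $i < m$, so $p_{fg}(x_i) \geq p_g(x_i)$ by the first clause of Theorem~\ref{theorem:3}; summing these $n$ inequalities gives $\sum_{i=1}^{n} p_{fg}(x_i) \geq \sum_{i=1}^{n} p_g(x_i)$ immediately.

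Second, if $n \geq m$, I would instead look at the complementary tails. For every index $i > n$ we have $i \geq n+1 > n \geq m$, hence $i \geq m$ and $p_{fg}(x_i) \leq p_g(x_i)$ by the second clause of Theorem~\ref{theorem:3}. Summing over $i > n$ (the tail series converge since each is the tail of a distribution summing to $1$) yields $\sum_{i>n} p_{fg}(x_i) \leq \sum_{i>n} p_g(x_i)$. Subtracting both sides from $1 = \sum_{i} p_{fg}(x_i) = \sum_{i} p_g(x_i)$ gives $\sum_{i=1}^{n} p_{fg}(x_i) \geq \sum_{i=1}^{n} p_g(x_i)$, completing this case.

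There is essentially no hard step here; the only mild point to be careful about is the countably infinite setting, where one must note that the tail sums are legitimate (finite) because both are probability distributions, so the "subtract from $1$" manipulation is valid. Combining the two cases establishes the claim for all $n \in \{1,2,\dots\}$.
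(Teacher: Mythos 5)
Your proposal is correct and follows essentially the same route as the paper's proof: both split on $n < m$ versus $n \geq m$, sum the pointwise inequalities from Theorem~\ref{theorem:3} in the first case, and in the second case use that both distributions have total mass $1$ so the head-sum difference equals the negative of the (termwise non-positive) tail-sum difference. Your extra remark about the validity of the tail sums in the countably infinite setting is a fine clarification but does not change the argument.
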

\begin{proof}
The theorem guarantees the existence of a positive integer $m$ such that $p_{fg}(x_i) \geq p_g(x_i)$ for all $i < m$ and $p_{fg}(x_i) \leq p_g(x_i)$ for all $i \geq m$.
In the case where $n < m$, we have $p_{fg}(x_i) \geq p_g(x_i)$ for all $i$ with $1 \leq i \leq n < m$. This leads to the inequality $\sum_{i=1}^{n} p_{fg}(x_i) \geq \sum_{i=1}^{n} p_{g}(x_i)$. 
In the case where $n \geq m$, we have $p_{fg}(x_i) \leq p_g(x_i)$ for all $i$ with $m \leq n < i$. Therefore, we can write $\sum_{i=1}^{n} (p_{fg}(x_i) - p_{g}(x_i)) = - \sum_{i=n+1}^{|\mathcal{X}|} (p_{fg}(x_i) - p_{g}(x_i)) \geq 0$. Consequently, the inequality $\sum_{i=1}^{n} p_{fg}(x_i) \geq \sum_{i=1}^{n} p_{g}(x_i)$ also holds. Therefore, in both cases, the corollary is proved.
\end{proof}
\subsection{Proof of Theorem 4}
\begin{theorem}
Let $f$ be an increasing convex function and $g$ be an increasing concave function. If $g$ satisfies $g'''(x)\cdot g'(x) \geq g''(x)^2>0$ for all $x \in (0,1)$, then the difference between $p_{fg}$ and $p_{g}$ exhibits a monotonic order: $p_{fg}(x_1) - p_{g}(x_1) \geq p_{fg}(x_2) - p_{g}(x_2) \geq ... \geq p_{fg}(x_{m-1}) - p_{g}(x_{m-1}) \geq 0$, where $m$ is the positive integer described in Theorem \ref{theorem:3}.
\end{theorem}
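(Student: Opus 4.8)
The plan is to work entirely from the first-order (stationarity) conditions of the two minimization problems, in the spirit of inequality~\ref{ieq} in the proof of Theorem~\ref{theorem:3}. Write $s_i = p_g(x_i)$, $r_i = p_{fg}(x_i)$, $a_i = p_{data}(x_i)$, and $\phi = (fg)'$. Since both losses are minimized subject to $\sum_i p(x_i)=1$, every sample carrying positive probability satisfies a Lagrange condition that is constant in $i$: $a_i\, g'(s_i)=\lambda_g$ and $a_i\, \phi(r_i)=\lambda_{fg}$. Throughout I assume the samples $x_i$ with $i<m$ have positive probability under both $p_g$ and $p_{fg}$; this is automatic for the log-type criteria used in practice (where $g'(0^{+})=\infty$), and for $i<m$ one has $r_i\in(0,1)$ once $m\ge 3$, since $p_{fg}(x_1)=1$ would force $p_{fg}(x_2)=0$ and hence $m\le 2$ (the case $m\le 2$ makes the statement vacuous beyond the bound $p_{fg}(x_{m-1})-p_g(x_{m-1})\ge 0$ already supplied by Theorem~\ref{theorem:3}). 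Fix a pair $i<j\le m-1$. The terminal inequality $p_{fg}(x_{m-1})-p_g(x_{m-1})\ge 0$ is exactly the $i=m-1$ case of Theorem~\ref{theorem:3}, so it remains to prove $r_i - s_i \ge r_j - s_j$, equivalently $r_i-r_j \ge s_i-s_j$.

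Dividing the two stationarity identities for indices $i$ and $j$ gives $g'(s_j)/g'(s_i) = a_i/a_j = \phi(r_j)/\phi(r_i)$, and this common ratio is $>1$ because $a_i>a_j$; since $g'$ is strictly decreasing this yields $s_i>s_j$, and together with Theorem~\ref{theorem:1} (which gives $r_i\ge r_j$) the relation $\phi(r_j)\neq\phi(r_i)$ forces $r_i>r_j$. Taking logarithms, $\log g'(s_j)-\log g'(s_i)=\log\phi(r_j)-\log\phi(r_i)$. Now $\phi(r)=f'(g(r))\,g'(r)$, so $\log\phi(r)=\log f'(g(r))+\log g'(r)$; because $g$ is increasing and $f$ is increasing and convex, $f'(g(r))$ is nonnegative and non-decreasing in $r$, hence $\log\phi(r_j)-\log\phi(r_i)\le \log g'(r_j)-\log g'(r_i)$ (using $r_j<r_i$). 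Writing $G(x)=g''(x)/g'(x)=(\log g')'(x)$, which is negative since $g''<0<g'$, we have $\log g'(s_j)-\log g'(s_i)=\int_{s_j}^{s_i}|G(x)|\,dx$ and likewise for $r$, so altogether $\int_{s_j}^{s_i}|G(x)|\,dx \;\le\; \int_{r_j}^{r_i}|G(x)|\,dx$.

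The last step is where the hypothesis on $g$ enters. The condition $g'''g'\ge (g'')^2>0$ on $(0,1)$ says exactly that $G'=(g'''g'-(g'')^2)/(g')^2\ge 0$, i.e. $G$ is non-decreasing, while $(g'')^2>0$ keeps $G$ strictly negative; hence $|G|=-G$ is a strictly positive, non-increasing function on $(0,1)$. By Theorem~\ref{theorem:3}, $r_j\ge s_j$ and $r_i\ge s_i$, so $[r_j,r_i]$ lies weakly to the right of $[s_j,s_i]$. Suppose for contradiction that $r_i-r_j<s_i-s_j$ and put $L=r_i-r_j>0$. Then $[s_j,s_j+L]\subseteq[s_j,s_i]$ with $s_j+L<s_i$, so $\int_{s_j}^{s_j+L}|G|\le \int_{s_j}^{s_i}|G|$; and since $[s_j,s_j+L]$ lies weakly to the left of $[r_j,r_j+L]=[r_j,r_i]$ while $|G|$ is non-increasing, a pointwise comparison after alignment gives $\int_{r_j}^{r_i}|G|\le \int_{s_j}^{s_j+L}|G|$. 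Chaining these with the displayed integral inequality forces all of them to be equalities, whence $\int_{s_j+L}^{s_i}|G|=0$ on the nondegenerate interval $[s_j+L,s_i]\subset(0,1)$, contradicting $|G|>0$. Therefore $r_i-r_j\ge s_i-s_j$, which is the desired inequality; as this holds for every pair $i<j\le m-1$, the full monotone chain $p_{fg}(x_1)-p_g(x_1)\ge\cdots\ge p_{fg}(x_{m-1})-p_g(x_{m-1})\ge 0$ follows.

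I expect the main obstacle to be precisely the last paragraph: converting $\int_{s_j}^{s_i}|G|\le\int_{r_j}^{r_i}|G|$ into the length comparison $r_i-r_j\ge s_i-s_j$. This is the only place the third-derivative hypothesis is genuinely used — it is what makes $|g''/g'|$ monotone, and without monotonicity the staggered-interval sandwich breaks. The remaining work is bookkeeping: justifying the Lagrange/stationarity equations (which needs the relevant probabilities strictly positive, hence the standing caveat) and checking that all integration limits stay inside $(0,1)$, where the hypothesis is posed. A purely discrete perturbation-and-contradiction argument in the style of the other proofs in this appendix is conceivable, but the log-derivative computation above appears to be the cleanest route.
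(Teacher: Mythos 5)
Your proof is essentially correct and, crucially, it uses the hypothesis $g'''(x)g'(x)\ge g''(x)^2>0$ in exactly the same role as the paper does — to make $h=\log g'$ convex, equivalently to make $(\log g')'=g''/g'$ monotone — but the surrounding architecture is genuinely different. The paper works by a local mass-transfer contradiction on $p_{fg}$: it moves mass $\alpha$ from $x_j$ to $x_i$, bounds the directional derivative at $\alpha=0$ using only \emph{one-sided} optimality facts (the inequality $p_{data}(x_i)g'(p_g(x_i))\ge p_{data}(x_j)g'(p_g(x_j))$, Theorem \ref{theorem:1}, and convexity of $f$), and closes with the increment inequality $h(x_2+\Delta x)-h(x_2)\le h(x_1+\Delta x)-h(x_1)$. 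You instead extract \emph{two-sided} stationarity identities $p_{data}(x_i)\,g'(s_i)=\lambda_g$ and $p_{data}(x_i)\,(fg)'(r_i)=\lambda_{fg}$ (with your notation $s_i=p_g(x_i)$, $r_i=p_{fg}(x_i)$), cancel $p_{data}$, drop the non-decreasing factor $f'(g(\cdot))$, and convert the result into the integral comparison $\int_{s_j}^{s_i}\lvert(\log g')'\rvert\le\int_{r_j}^{r_i}\lvert(\log g')'\rvert$, finishing with a clean translation/length argument using that $\lvert(\log g')'\rvert$ is positive and non-increasing; that step is sound and gives a rather transparent picture of why the differences must be ordered. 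What the paper's route buys is that it never needs your standing positivity caveat: its contradiction hypothesis already forces $p_{fg}(x_j)>0$, and its use of $p_g$'s optimality is only an inequality that survives boundary optima. Your caveat is not implied by the theorem's hypotheses (e.g.\ $g(x)=-1/(1+x)$ satisfies $g'''g'\ge(g'')^2>0$ yet has $g'(0^+)<\infty$, so optima with zero entries are possible, and your side remark that such cases force $m\le 2$ is not quite right since $m$ is not pinned down there), so as written you prove slightly less than the stated theorem; the gap is routine to close, though — when $r_j=0$ or $s_i=s_j=0$ the pairwise claim collapses to Theorem \ref{theorem:1}/\ref{theorem:3}, and when only $s_j=0$ the one-sided KKT inequality $p_{data}(x_i)g'(s_i)\ge p_{data}(x_j)g'(s_j)$ (the paper's inequality \ref{ieq}) enters your chain with the correct orientation. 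With that patch, yours stands as a valid alternative proof.
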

\begin{proof} 
We prove this theorem by contradiction. Assuming the theorem does not hold, there must exist an indice indice $(i,j)$ with $i<j$ and $0\leq p_{fg}(x_i)-p_g(x_i)<p_{fg}(x_j)-p_g(x_j)$. In this case, we can construct a distribution $p_{fg}'$ with a lower loss than $p_{fg}$, which contradicts the optimality of $p_{fg}$. 
Specifically, let $p_{fg}'$ be identical to $p_{fg}$ except for the changes $p_{fg}'(x_i)=p_{fg}(x_i)+\alpha$ and $p_{fg}'(x_j)=p_{fg}(x_j)-\alpha$, where $0 \leq \alpha \leq p_{fg}(x_j)$. Then we can calculate the gradient of loss $\mathcal{L}_{fg}(p_{\theta}=p_{fg}')$ with respect to $\alpha$:
\begin{equation}
\label{eq:theo4}
\frac{\partial \mathcal{L}_{fg}(p_{\theta}=p_{fg}')}{\partial \alpha}\bigg|_{\alpha=0}=p_{data}(x_j) \cdot f'g(p_{fg}(x_j))\cdot g'(p_{fg}(x_j)) - p_{data}(x_i) \cdot f'g(p_{fg}(x_i))\cdot g'(p_{fg}(x_i)).
\end{equation}

Our goal is to demonstrate that $\frac{\partial \mathcal{L}_{fg}(p_{\theta}=p_{fg}')}{\partial \alpha}\bigg|_{\alpha=0} < 0$, which would contradict the assumption that $p_{fg}$ minimizes $\mathcal{L}_{fg}$, thereby proving the theorem. 

Given the optimality of $p_g$, we have $p_{data}(x_i)\cdot g'(p_{g}(x_i)) \geq p_{data}(x_j)\cdot g'(p_{g}(x_j))$, otherwise we can reduce $p_{g}(x_i)$ to obtain a lower loss. From Theorem \ref{theorem:1}, we know that $p_{fg}(x_i) \geq p_{fg}(x_j)$, and because $f$ is convex and $g$ is increasing, we have $f'g(p_{fg}(x_i)) \geq f'g(p_{fg}(x_j))$. Using these inequalities, we obtain a upperbound of equation \ref{eq:theo4}:
\begin{equation}
\begin{aligned}
\frac{\partial \mathcal{L}_{fg}(p_{\theta}=p_{fg}')}{\partial \alpha} & \bigg|_{\alpha=0}  /\ (p_{data}(x_i)\cdot g'(p_{g}(x_i)))\\
& \hspace{-1.2cm}= \frac{p_{data}(x_j) \cdot f'g(p_{fg}(x_j))\cdot g'(p_{fg}(x_j))}{p_{data}(x_i)\cdot g'(p_{g}(x_i))} - \frac{p_{data}(x_i) \cdot f'g(p_{fg}(x_i))\cdot g'(p_{fg}(x_i))}{p_{data}(x_i)\cdot g'(p_{g}(x_i))}\\
&\hspace{-1.2cm}\leq \frac{p_{data}(x_j) \cdot f'g(p_{fg}(x_j))\cdot g'(p_{fg}(x_j))}{p_{data}(x_j)\cdot g'(p_{g}(x_j))} - \frac{p_{data}(x_i) \cdot f'g(p_{fg}(x_i))\cdot g'(p_{fg}(x_i))}{p_{data}(x_i)\cdot g'(p_{g}(x_i))}\\
&\hspace{-1.2cm}=\frac{f'g(p_{fg}(x_j))\cdot g'(p_{fg}(x_j))}{g'(p_{g}(x_j))} - \frac{f'g(p_{fg}(x_i))\cdot g'(p_{fg}(x_i))}{g'(p_{g}(x_i))}\\
&\hspace{-1.2cm}\leq f'g(p_{fg}(x_j))\cdot (\frac{g'(p_{fg}(x_j))}{g'(p_{g}(x_j))} - \frac{g'(p_{fg}(x_i))}{g'(p_{g}(x_i))}).
\end{aligned}
\end{equation}

To demonstrate that $\frac{\partial \mathcal{L}{fg}(p{\theta}=p_{fg}')}{\partial \alpha}\bigg|_{\alpha=0} < 0$, we only need to prove the following inequality:
\begin{equation}
\frac{g'(p_{fg}(x_j))}{g'(p_{g}(x_j))} - \frac{g'(p_{fg}(x_i))}{g'(p_{g}(x_i))} < 0.
\end{equation}

Let $\Delta x=p_{fg}(x_i)-p_g(x_i)<p_{fg}(x_j)-p_g(x_j)$. As a result, $p_{fg}(x_j) > \Delta x + p_g(x_j)$, and thus $g'(p_{fg}(x_j)) < g'(p_g(x_j) + \Delta x)$. This allows us to further simplify the inequality:
\begin{equation}
\frac{g'(p_{fg}(x_j))}{g'(p_{g}(x_j))} - \frac{g'(p_{fg}(x_i))}{g'(p_{g}(x_i))} < \frac{g'(p_{g}(x_j) + \Delta x)}{g'(p_{g}(x_j))} - \frac{g'(p_{g}(x_i) + \Delta x)}{g'(p_{g}(x_i))}.
\end{equation}

To establish that the right-hand side of the above inequality is non-positive, we can apply the logarithm transformation and show the following inequality instead:
\begin{equation}
\log(g'(p_{g}(x_j) + \Delta x) - \log(g'(p_{g}(x_j))) \leq \log(g'(p_{g}(x_i) + \Delta x) - \log(g'(p_{g}(x_i))).
\end{equation}

Let's denote $h(x)=\log(g'(x))$, $x_1 = p_{g}(x_i)$, and $x_2 = p_{g}(x_j)$. The above inequality can be simplified to:
\begin{equation}
h(x_2 + \Delta x) - h(x_2) \leq h(x_1 + \Delta x) - h(x_1),
\end{equation}
where $\Delta x \geq 0$ and $x_2 \leq x_1$ according to Theorem \ref{theorem:1}. The above inequality holds when $h(x)$ is a convex function. The second-order derivative of $h(x)=\log(g'(x))$ is:
\begin{equation}
h''(x)=\frac{g'''(x)g'(x)-g''(x)^2}{g'(x)^2}.
\end{equation}

Therefore, $h(x)$ is a convex function under the condition $g'''(x)\cdot g'(x) \geq g''(x)^2$. This verifies that $\frac{\partial \mathcal{L}_{fg}(p_{\theta}=p_{fg}')}{\partial \alpha}\bigg|_{\alpha=0} < 0$, completing the proof by contradiction.
\end{proof}

\section{Experimental Settings}
\label{sec:settings}
\subsection{Machine Translation}
\subsubsection{Datasets and Metrics}
\textbf{Datasets} We conduct experiments on widely used translation benchmark: WMT14 English-German (EN-DE, 4.5M), where the validation and test sets are \emph{newstest2013} and \emph{newstest2014} respectively.
We apply BPE \citep{DBLP:conf/acl/SennrichHB16a} with 32K merge operations to learn a joint vocabulary on the tokenized data.
Considering the major topic of this research is how to learn from a real-world data distribution, we don't apply any tricks that may have an influence on the distribution, e.g., knowledge distillation.

\textbf{Metrics} The overall quality of translation is assessed using metrics BLEU \citep{papineni-etal-2002-bleu} and COMET \citep{rei-etal-2022-comet}.\footnote{We use checkpoint \texttt{Unbabel/wmt22-comet-da} to compute COMET score. It is available at \url{https://github.com/Unbabel/COMET}.}
In the case of non-autoregressive models, we additionally quantify the prediction confidence and translation fluency of the generated output.
Prediction confidence is measured with negative log-likelihood (NLL) of model generation.
A lower NLL value indicates a more focused model distribution and higher prediction confidence.
To evaluate translation fluency, we utilize an external pre-trained autoregressive language model. 
The generated translation is fed to the language model using teacher forcing and the resulting perplexity (PPL) is calculated as a measure of fluency.\footnote{We use checkpoint \texttt{transformer\_lm.wmt19.de} to compute the external PPL score. It is available at \url{https://github.com/facebookresearch/fairseq/tree/main/examples/language_model}.}
A lower external PPL score indicates a higher level of fluency.

\subsubsection{Implementation Details}
\textbf{Architectures} In order to validate the overall efficacy of convex-composition loss, we perform experiments using various model architectures.
We adopt Transformer-\emph{base} \citep{vaswani2017attention} as our autoregressive baseline and Vanilla-NAT \citep{gu2017non}, CMLM \citep{ghazvininejad2019maskpredict} and CTC \citep{saharia-etal-2020-non} as our non-autoregressive baselines.
We apply \emph{uniform copy} to construct decoder inputs in Vanilla-NAT and CTC. 
The decoder length in CTC is set to 2× the source length.

\textbf{Training} Although training with convex-composition loss offers the desirable property of optimality, it can encounter gradient vanishing issues during initialization as analyzed previously. To mitigate this, we employ a two-step training approach: MLE pre-training followed by fine-tuning with convex-composition loss. This approach allows us to avoid numerical gradient issues while still benefiting from the optimality achieved through convex composition.
For training with convex-composition loss, we set the exponent hyperparameter $k$ to 1 for the autoregressive model and tune it from \{1,2,3,5,8\} on the validation set for non-autoregressive models.
Throughout both MLE and convex-composition training, all models are optimized using the Adam optimizer \citep{DBLP:journals/corr/KingmaB14} with $\beta = (0.9, 0.98)$ and $\epsilon = 10^{-8}$. Detailed information regarding other training hyperparameters can be found in Table \ref{table:Table_params}.
\begin{table}[ht]
\caption{Settings of training hyperparameters on WMT14 EN$\leftrightarrow$DE dataset.}
\label{table:Table_params}
\small 
\begin{center}
\begin{tabular}{lcccccccc}
\toprule
     &            \multicolumn{2}{c}{\bf Transformer} &  \multicolumn{2}{c} {  \bf    Vanilla-NAT} &    \multicolumn{2}{c} {\bf CMLM}    &  \multicolumn{2}{c} {\bf CTC} \\

           &      \bf      MLE     &  \bf Convex &  \bf MLE   &   \bf  Convex  & \bf  MLE    & \bf   Convex & \bf MLE    &   \bf Convex \\
\midrule
 batch size       &      32k      &  32k   &        64k    &    256k  &    64k  &      256k &    64k &       256k \\
 learning rate &7e-4 & 2e-4 &5e-4 &3e-4 &5e-4 & 3e-4& 5e-4 &3e-4 \\
 warmup steps   &4k & 1k &10k &500 &10k & 500& 10k& 500 \\
 training steps &200k & 50k & 300k &10k &300k & 10k& 300k & 10k \\
 dropout      &         0.1     &    0.1    &   0.3   &      0.3  &      0.3  &       0.3  &    0.3   &     0.1 \\
 weight decay &0 & 0 & 0.01 & 0.01 & 0.01 & 0.01 & 0.01 & 0.01 \\
 label smoothing &0.1 & 0.1 & 0.1 & 0 & 0.1 & 0 & 0.01 & 0 \\
 length loss factor &- &- & 0.1 & 0.01 & 0.1 & 0.01 & -&- \\

\bottomrule 

\end{tabular}
\end{center}
\end{table}

\textbf{Decoding}
For the autoregressive model, we set the beam length to 5 by default and tune the length penalty on the validation set unless stated otherwise.
For Vanilla-NAT and CTC, we utilize fully non-autoregressive argmax decoding.
In the case of CMLM, we employ 5 length candidates and disable iteration for inference.
The decoding speedup is measured with a batch size of 1 on GeForce RTX 3090 GPUs.

\subsection{Abstractive Summarization}
\subsubsection{Datasets and Metrics}
We conduct experiments on two widely used summarization benchmarks: CNN/DailyMail \citep{NIPS2015_afdec700} and Xsum \citep{narayan-etal-2018-dont}.
CNN/DailyMail contains 220K articles from the Daily Mail newspaper and 93K articles from CNN.
Each article contains a bullet point summary consisting of multiple sentences.
We use the non-anonymized variant following \citep{see-etal-2017-get,DBLP:conf/acl/LiuYGQZJCFSGWCJ21}.
After the pre-processing, there are 311,971 ⟨article, summary⟩ pairs.
XSum consists of 227K online articles from the British Broadcasting Corporation (BBC), containing professionally written single-sentence summaries.
After the preprocessing, there are 226,677 ⟨article, summary⟩ data pairs. In order to maintain consistency with previous works \citep{DBLP:journals/corr/abs-1910-13461,DBLP:conf/icml/QiG0YCLTLCZ0D21}, we employ GPT-2 tokenizer to tokenize raw CNN/DailyMail data, and Berttokenizer to tokenize raw Xsum data. The summarization quality is measured with ROUGE-1,
ROUGE-2 and ROUGE-L \citep{lin-2004-rouge} as discussed in \citep{bhandari-etal-2020-evaluating}.

\subsubsection{Implementation Details}
In our summarization experiments, most of the implementation details of the Transformer align with those used in translation. However, there are a few modifications to ensure consistency with previous work \citep{DBLP:journals/corr/abs-1910-13461}. We apply layer normalization to the embeddings. The attention dropout is set to 0.1, and the weight decay is set to 0.01. We utilize beam search with a size of 4 during decoding. The length penalty, max\_len\_b, and min\_len are set to 2.0, 140, and 55, respectively on CNN/DailyMail dataset. We use a length penalty of 1.2 on Xsum dataset. For CNN/DailyMail dataset, we additionally employ a tri-gram repetition prevention trick.

\subsection{Large Language Models}

For the development of LLMs, we utilize LLaMA-7B and LLaMA-13B \cite{touvron2023llama} as our foundation models. We conduct instruction tuning using the Alpaca dataset by GPT4 \cite{selfinstruct,alpaca}, which comprises 52K instruction-following demonstrations. Instead of the standard cross-entropy loss employed during instruction tuning, we adopt the convex-composition loss of exponential form to fine-tune foundation models.

The generative capability of LLMs is also evaluated on the two representative closed-ended text generation tasks: machine translation and text summarization. For machine translation, we follow previous works \cite{jiao2023parrot,bayling,zeng2023tim,liu2023instruction} to evaluate the translation capability on four WMT22 translation tasks (Chinese-to-English, English-to-Chinese, German-to-English, and English-to-German). For text summarization, we follow \citet{liu2023instruction} to conduct the evaluation on CNN/DailyMail Dataset \citep{NIPS2015_afdec700}. We employ beam search with a beam size of 4 for machine translation and 2 for summarization. The prompt for machine translation is "Translate the following sentences from [SRC] to [TGT]." The prompt for summarization is "Write a brief and focused summary of the passage that follows.".

\section{Effects of $k$ on AR Models}
We study the effects of exponent hyper-parameter $k$ on autoregressive models. Table \ref{table:Results_K} presents the BLEU scores of autoregressive models as the exponent $k$ varies, showing that the optimal performance is achieved when $k=1$. Other choices of $k$, such as $k=0.5$ or $0.75$, also yield improvements, predominantly in the context of the greedy search setting.

\begin{table}[h]
\caption{BLEU scores of autoregressive models as the exponent k varies on WMT14 EN-DE test set.}
\label{table:Results_K}
\small 
\begin{center}
\begin{tabular}{cccccc}
\toprule
\bf{k-th Power}&0.5&0.75&1&2&3\\
\midrule
\bf{Greedy} &26.89  &26.89 & 26.92 & 26.78 & 26.13 \\
\midrule
\bf{Beam5} & 27.62 &   27.74 & 27.78 &  27.49&  26.76 \\
\bottomrule
\end{tabular}
\end{center}
\end{table}

\section{Correlations on Other NAR Models}
In Section \ref{exp:nar}, we present compelling evidence in support of the mode collapse property of convex function effectively mitigating the multimodality issue in the NAR model. This evidence is derived from the strong correlation observed between model entropy and generation fluency in the CMLM model, as demonstrated in Table \ref{table:Results_Corr}. In this section, we provide additional evidence for other NAR models to further support our findings in Table \ref{table:Results_Corr_Vanilla} and \ref{table:Results_Corr_CTC}.

\begin{table}[ht]
\caption{Prediction confidence (Output NLL) and generation fluency (External PPL) of Vanilla-NAT on WMT14 EN-DE test set.}
\label{table:Results_Corr_Vanilla}
\small 
\begin{center}
\begin{tabular}{lccccc}
\toprule
\bf k-th Power &\bf 1 &\bf 2 &\bf 3 &\bf 5 &\bf 8\\
\midrule
\bf Confidence (Output NLL) $\downarrow$ & 23.34 &  16.17 &   11.32 &    6.25 &   6.27\\
\bf Fluency (External PPL) $\downarrow$ & 1000.06 & 730.91 & 463.78 &  344.56 & 353.40\\
\bottomrule 

\end{tabular}
\end{center}
\end{table}
\begin{table}[h]
\caption{Prediction confidence (Output NLL) and generation fluency (External PPL) of CTC on WMT14 EN-DE test set.}
\label{table:Results_Corr_CTC}
\small 
\begin{center}
\begin{tabular}{lccccc}
\toprule
\bf k-th Power &\bf 1 &\bf 2 &\bf 3 &\bf 5 &\bf 8\\
\midrule
\bf Confidence (Output NLL) $\downarrow$ & 18.74  & 13.88 &  11.20 &   7.69  &  5.55\\
\bf Fluency (External PPL) $\downarrow$ & 174.79 & 142.80 & 134.28 & 137.07 & 154.60\\
\bottomrule 

\end{tabular}
\end{center}
\end{table}

\section{Results on Alternative Choice of Convex Function}
\label{sec:power}
In addition to the exponential function, we have also explored another choice of convex function in our framework of convex-composition loss. In this section, we discuss the results of the choice of power function, i.e., $\mathcal{L}_{f}(\theta)=-\mathbb{E}_{x\sim p_{data}(x)}[-(-\frac{\log(p_{\theta}(x))}{T})^k],   0\leq k \leq 1$.
The results obtained from applying the power function in convex-composition loss are presented in Table \ref{table:Results_Power}.
\begin{table}[ht]
\caption{Results of BLEU scores by applying power function in convex-composition loss. We denote the MLE baseline by using $k=1.0$. We employ greedy decoding for Transformer. In cases where training fails, we use "N/A" to denote such instances.}
\label{table:Results_Power}
\small 
\begin{center}
\begin{tabular}{lccccc}
\toprule
\bf k-th Power &\bf 0.1  &\bf 0.3 &\bf 0.5 &\bf 0.7 &\bf 1.0 \\
\midrule
\bf Transformer  & 26.64 & 26.68 & 26.60 & 26.52 & 26.48 \\
\bf Vanilla-NAT  & N/A &  N/A &  10.74  & 10.51 &  10.41 \\
\bottomrule 

\end{tabular}
\end{center}
\end{table}

We have observed that the benefits of applying the power function within the convex composition framework are significantly marginal compared to the exponential function, especially in the case of Vanilla-NAT. In addition, we have found the training process may encounter difficulties or failure when $k$ is approaching 0. We attribute such problem to the shape of $f'(g(p_{\theta}(x)))$ when power function is applied, i.e., $ k \cdot (-\frac{\log(p_{\theta}(x))}{T})^{k-1}$.
\begin{figure}[h]
    \centering
    \includegraphics[width=0.55\linewidth]{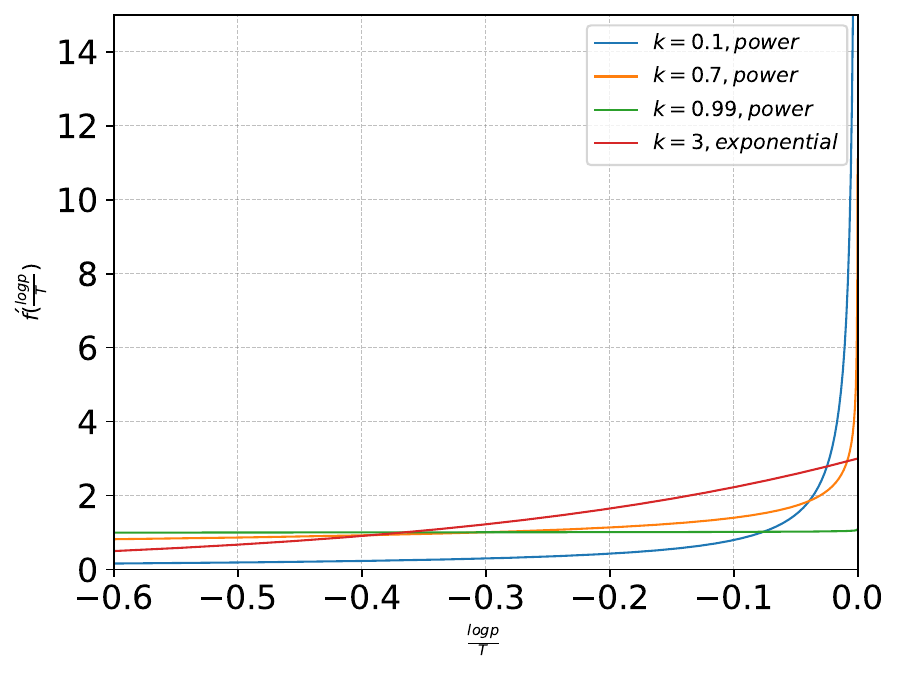}
    \caption{Shapes of $f'(g(p_{\theta}(x)))$ when different convex functions are applied.}
    \label{fig:Shape}
\end{figure}

As shown in Figure \ref{fig:Shape}, the value of $f'(g(p_{\theta}(x)))$ will approach a constant 1 as $k$ approaches 1. This phenomenon arises due to the reduction in the convexity of function $f$, resulting in a decrease in gain. In case of $k$ approaching 0, the situation is even worse where $f'(g(p_{\theta}(x)))$ will experience a sudden increase from an extremely small value near 0. These factors result in an unstable training process and contribute to the power function being less suitable within the framework of convex-composition loss.

\section{Results on Diverse Generation}
We study the effects of convex functions on VAE-based text generation models by replacing the log-probability-based reconstruction loss in ELBO with the convex-composition loss. Formally, we train the model using the following loss:
\begin{equation}
 \mathbb{E}_{z\sim q(z|x)} -f(\frac{1}{T}\log p(x|z)) + \mathrm{KL}(q(z|x)||p(z)),   
\end{equation}
where we opt for the convex function $f$ to be $e^{kx}, k\geq 0$. We perform experiments within the context of conditional generation, utilizing a VAE-based non-autoregressive model \citep{Shu_Lee_Nakayama_Cho_2020,gu-kong-2021-fully} for the task of machine translation. During inference, we randomly sample the latent variable 3 times to generate diverse texts. We assess the quality with BLEU score computed against reference (reference-BLEU) and measure the diversity with BLEU score computed against each other (pairwise-BLEU). The average value and standard derivation are reported in Table \ref{table:Results_Diversity}.

\begin{table}[h]
\caption{Reference-BLEU and Pairwise-BLEU scores of VAE-based NAT models trained with different objectives on WMT14 EN-DE test set. The texts are generated by sampling the latent distribution 3 times. }
\label{table:Results_Diversity}
\begin{center}
\begin{tabular}{lcc}
\toprule
            &     \bf      ELBO     &  \bf Convex + KL  \\
\midrule
\bf Reference-BLEU       &    16.23\small{±.14}  &  23.35\small{±.04} \\
\bf Pairwise-BLEU     &   29.52\small{±.20} &  91.91\small{±.03}    \\

\bottomrule 

\end{tabular}
\end{center}
\end{table}

During the training process, we have observed that KL divergence tends to vanish more readily when the convex functions are applied. We attribute this phenomenon to the smaller norms of gradients associated with the convex-composition loss. As a result, the gradient of the KL divergence dominates the model update, leading to the KL divergence vanishing. We note VAE-based text generation models trained using the convex-composition loss exhibit a higher generation quality while suffering from poor diversity, which is consistent with the mode collapse property of convex function.

\section{Analysis of Convex Learning and Knowledge Distillation}
With the ability to capture a concentrated distribution from datasets exhibiting a multi-modal distribution, the proposed convex learning approach shows similar dynamics to knowledge distillation \citep{44873}, a technique which encourages the student model to imitate the output of the teacher model. To compare the two methods, we utilize autoregressive Transformer as the teacher and apply sequence-level knowledge distillation \citep{kim-rush-2016-sequence} to construct a dataset of lower complexity, and train the models using different losses. 

\begin{table}[h]
\caption{BLEU scores of autoregressive and vanilla-NAT models trained with or without knowledge distillation (KD) on WMT14 EN-DE test set.}
\label{table:Results_KD}
\begin{center}
\begin{tabular}{lcccc}
\toprule
     &            \multicolumn{2}{c}{\bf Transformer} &  \multicolumn{2}{c} {  \bf    Vanilla-NAT}\\

           &      \bf      MLE     &  \bf Convex &  \bf MLE   &   \bf  Convex  \\
\midrule
\bf{w/ KD}       &     27.73      & 27.80    &        19.18    &    23.17  \\
 \bf{w/o KD}     &     27.57      &  27.78   &   10.41         &   16.74    \\

\bottomrule 

\end{tabular}
\end{center}
\end{table}
\begin{figure}[h!]
    \centering
    \includegraphics[width=0.55\linewidth]{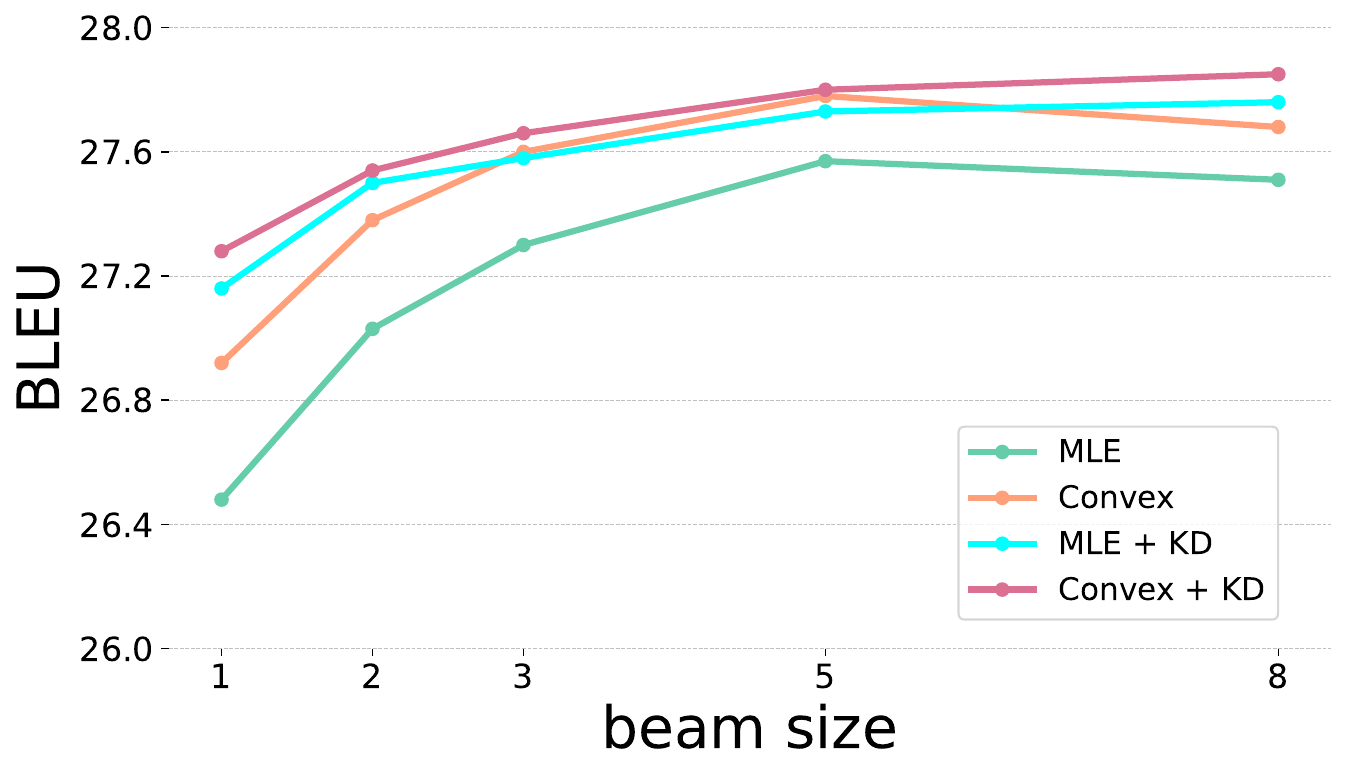}
    \caption{BLEU scores of autoregressive models as beam size varies with or without knowledge distillation (KD) on WMT14 EN-DE test set.}
    \label{Fig:Decoding_KD}
\end{figure}

The results in Table \ref{table:Results_KD} and Figure \ref{Fig:Decoding_KD} demonstrate that convex learning and knowledge distillation have similar effects on text generation models. Both methods lead to significant improvements on non-autoregressive models and bridge the performance gap between greedy and beam search of autoregressive models. It is worth noting that training with the convex-composition loss avoids the intricate process of training an additional teacher model and decoding the whole training set to achieve the improvements. Moreover, convex-composition loss can be combined with knowledge distillation to further enhance the performance.

\end{document}